\newtheorem{theorem}{Theorem}
\newtheorem{lemma}{Lemma}
\newtheorem{definition}{Definition}
\newtheorem{proposition}{Proposition}
\def\RB{{\mathbb R}}
\def\NB{{\mathbb N}}
\def\EB{{\mathbb E}}
\def\Var{{\mathbb Var}}
\def\epsi{\mbox{\boldmath$\epsilon$\unboldmath}}
\def\etab{\mbox{\boldmath$\eta$\unboldmath}}
\def\Si{\mbox{\boldmath$\Sigma$\unboldmath}}
\def\argmin{\mathop{\rm argmin}}
\def\liml{\mathop{\lim}\limits}
\def\NoB{\mathrm{NB}}
\def\PG{\mathrm{PG}}
\def\CoP{\mathrm{CP}}
\def\SB{\mathrm{SB}}
\def\Ga{\mathrm{Ga}}
\def\IGa{\mathrm{IG}}
\def\Po{\mathrm{Po}}
\def\rk{\mathrm{rank}}
\begin{document}

\begin{frontmatter}
\title{Compound Poisson Processes, Latent Shrinkage Priors and
Bayesian Nonconvex Penalization}
\runtitle{Bayesian Nonconvex Penalization}

\begin{aug}
\author[a]{\fnms{Zhihua} \snm{Zhang}\ead[label=e1]{zhihua@sjtu.edu.cn}}
\and
\author[b]{\fnms{Jin} \snm{Li}\ead[label=e2]{lijin@sjtu.com}}

\runauthor{Z. Zhang and J. Li}


\address[a]{Department of Computer Science and Engineering,
Shanghai Jiao Tong University, Shanghai, China, \printead{e1}}
\address[b]{Department of Computer Science and Engineering,
Shanghai Jiao Tong University, Shanghai, China, \printead{e2}}


\end{aug}

%
\begin{abstract}
In this paper we discuss Bayesian nonconvex penalization for sparse
learning problems.
We explore a nonparametric formulation for latent shrinkage parameters
using subordinators which are one-dimensional L\'{e}vy processes.
We particularly study a family of continuous compound Poisson
subordinators and a family of discrete
compound Poisson subordinators. We exemplify four specific
subordinators: Gamma, Poisson, negative binomial and squared Bessel
subordinators.
The Laplace exponents of the subordinators are Bernstein functions, so
they can be used as sparsity-inducing nonconvex penalty functions.
We exploit these subordinators in regression problems, yielding a
hierarchical model with multiple regularization parameters.
We devise ECME (Expectation/Conditional Maximization Either) algorithms
to simultaneously estimate regression coefficients and regularization
parameters. The empirical evaluation of simulated data shows that our
approach is feasible and effective in high-dimensional
data analysis.
\end{abstract}

%
\begin{keyword}
\kwd{nonconvex penalization}
\kwd{subordinators}
\kwd{latent shrinkage parameters}
\kwd{Bernstein functions}
\kwd{ECME algorithms}
\end{keyword}


\end{frontmatter}


\section{Introduction}
\label{sec:intro}

Variable selection methods based on penalty theory have received great
attention in high-dimensional
data analysis.
A principled approach is due to the lasso of \citet
{TibshiraniLASSO:1996}, which uses the $\ell_1$-norm penalty.
\citet{TibshiraniLASSO:1996} also pointed out that the lasso estimate can
be viewed as the mode of the posterior distribution. Indeed, the $\ell
_1$ penalty can be transformed into the Laplace prior.
Moreover, this prior can be expressed as a Gaussian scale mixture. This
has thus led to Bayesian developments of the lasso and its
variants~\citep
{Figueiredo:2003,Park:2008,Hans:2009,KyungBA:2010,GriffinBrownBA:2010,LiLinBA:2010}.

There has also been work on nonconvex penalization under a parametric
Bayesian framework.
\citet{ZouLi:2008} derived their local linear approximation (LLA)
algorithm by combining the expectation maximization (EM)
algorithm with an inverse Laplace
transform. In particular, they showed that the $\ell_q$ penalty with
$0<q<1$ can be
obtained by mixing the Laplace distribution with a stable density.
Other authors have shown that the prior induced from a penalty, called
the nonconvex LOG penalty and defined in equation (\ref{eqn:logp}) below,
has an interpretation as a scale mixture of Laplace distributions with an
inverse Gamma mixing distribution~\citep
{CevherNIPS:2009,GarriguesfNIPS:2010,LeeCaronDoucetHolmes:2010,ArmaganDunsonLee}.
Recently, \citet{ZhangEPGIG:2012} extended this class of Laplace
variance mixtures by using a generalized inverse Gaussian mixing distribution.
Related methods include the Bayesian hyper-lasso~\citep{GriffinBrownBHL:2011},
the horseshoe model~\citep{CarvalhoPolson:2010,CarvalhoPS:2009} and
the Dirichlet Laplace prior~\citep{Bhattacharya:2012}.

In parallel,
nonparametric Bayesian approaches have been applied to variable
selection~\citep{GhahramanBS:2006}.
For example, in the infinite Gamma Poisson model~\citep
{TitsiasNIPS:2007} negative binomial processes are used to describe
non-negative integer valued matrices,
yielding a nonparametric Bayesian feature selection approach under an
unsupervised learning setting.
The beta-Bernoulli process provides a nonparametric Bayesian tool
in sparsity modeling~\citep
{ThibauxJordan:2007,BJP:2012,Paisley2009:icml,TehNIPS:2009}.
Additionally, \citet{CaronDoucet:icml} proposed a nonparametric
approach for normal variance mixtures
and showed that the approach is closely related to L\'{e}vy processes.
Later on, \citet{PolsonScott:2011}
constructed sparse priors using increments of subordinators, which
embeds finite dimensional normal variance mixtures in infinite ones.
Thus, this provides a new framework for the construction of
sparsity-inducing priors. Specifically, \citet{PolsonScott:2011}
discussed the use of $\alpha$-stable subordinators and inverted-beta
subordinators for
modeling joint priors of regression coefficients.
\citet{ZhangNIPS:2012} established the connection of two nonconvex
penalty functions, which are referred to as LOG
and EXP and defined in equations (\ref{eqn:logp}) and (\ref{eqn:exp})
below, with the Laplace transforms of the Gamma and Poisson subordinators.
A subordinator
is a one-dimensional L\'{e}vy process that is almost surely
non-decreasing~\citep{Sato:1999}.

In this paper we further study the application of subordinators in
Bayesian nonconvex penalization problems under
supervised learning scenarios. Differing from the previous treatments,
we model latent shrinkage parameters
using subordinators which are defined as stochastic processes of
regularization parameters.
In particular,
we consider two families of compound Poisson subordinators: continuous
compound Poisson subordinators
based on a Gamma random variable~\citep{Aalen:1992} and discrete
compound Poisson subordinators based on a logarithmic random
variable~\citep{Sato:1999}. The corresponding L\'{e}vy measures are
generalized Gamma~\citep{Brix:1999}
and Poisson measures, respectively.
We show that both the Gamma and Poisson subordinators
are limiting cases of these two families of the compound Poisson subordinators.

Since the Laplace exponent of a subordinator is a Bernstein function,
we have two families of nonconvex penalty functions,
whose limiting cases are the nonconvex LOG and EXP.
Additionally,
these two families of nonconvex penalty functions can be defined via
composition of LOG and EXP, while
the continuous and discrete compound Poisson subordinators
are mixtures of Gamma and Poisson processes.

Recall that the latent shrinkage parameter is a stochastic process of
the regularization parameter.
We formulate a hierarchical model with multiple regularization parameters,
giving rise to a Bayesian approach for nonconvex penalization. To
reduce computational expenses, we devise
an ECME (for ``Expectation/Conditional Maximization Either") algorithm
\citep{LiuBio:1994}
which can adaptively adjust the local regularization parameters in
finding the sparse solution simultaneously.

The remainder of the paper is organized as follows.
Section~\ref{sec:levy}
reviews the use of L\'{e}vy processes in Bayesian sparse learning problems.
In Section~\ref{sec:gps} we study two families of compound Poisson processes.
In Section~\ref{sec:blrm} we apply the L\'{e}vy processes to Bayesian
linear regression
and devise an ECME algorithm for finding the sparse solution.
We conduct empirical evaluations using simulated data in Section~\ref
{sec:experiment}, and conclude our work in
Section~\ref{sec:conclusion}.

\section{Problem Formulation}
\label{sec:levy}

Our work is based on the notion of Bernstein and completely monotone
functions as well as subordinators.

\begin{definition} Let $g \in C^{\infty}(0, \infty)$ with $g\geq0$.
The function $g$ is said to be completely monotone if $(-1)^n g^{(n)}
\geq0$ for all $n \in\NB$ and
Bernstein if $(-1)^n g^{(n)} \leq0$ for all $n \in\NB$.
\end{definition}

Roughly speaking,
a \emph{subordinator} is a one-dimensional L\'{e}vy process that is
non-decreasing almost surely. Our work is mainly motivated by
the property of subordinators given in Lemma~\ref{lem:subord}~\citep
{Sato:1999,Applebaum:2004}.

\begin{lemma} \label{lem:subord} If $T=\{T(t): t\geq0\}$ is a
subordinator, then the Laplace transform of its density takes the form
\[
\EB\big(e^{- s T(t)} \big) = \int_{0}^{\infty} { e^{-s \eta}
f_{T(t)}(\eta) d \eta}
\triangleq e^{- t \Psi(s)} \quad\mbox{ for } s> 0,
\]
where $f_{T(t)}$ is the density of $T(t)$ and
$\Psi$, defined on $(0, \infty)$, is referred to as the \emph
{Laplace exponent} of the subordinator and has the following representation
\begin{equation} \label{eqn:psi}
\Psi( s) = \beta s + \int_{0}^{\infty} \big[1- e^{- s u} \big] \nu
(d u).
\end{equation}
Here $\beta\geq0$ and $\nu$ is the L\'{e}vy measure such that $\int
_{0}^{\infty} { \min(u, 1) \nu(d u) } < \infty$. 

Conversely, if $\Psi$ is an arbitrary mapping from $(0, \infty)
\rightarrow(0, \infty)$ given by expression (\ref{eqn:psi}),
then $e^{- t \Psi(s)}$ is
the Laplace transform of the density of a subordinator.
\end{lemma}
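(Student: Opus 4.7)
The plan is to prove both directions of this Lévy–Khintchine-type characterization. For the forward direction, I would first exploit the defining property that a subordinator $T$ is a Lévy process: it has stationary and independent increments and is stochastically continuous with $T(0)=0$ a.s. Independence and stationarity of increments give multiplicativity of the Laplace transform, $\EB(e^{-sT(t+u)})=\EB(e^{-sT(t)})\EB(e^{-sT(u)})$ for all $s,t,u>0$, and stochastic continuity then forces $t\mapsto \EB(e^{-sT(t)})$ to be a continuous positive multiplicative function on $[0,\infty)$. Standard arguments (Cauchy's functional equation on the logarithm) conclude that $\EB(e^{-sT(t)})=e^{-t\Psi(s)}$ for some $\Psi(s)\geq 0$, with $\Psi(s)\geq 0$ following from $T(t)\geq 0$.

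To obtain the integral representation (\ref{eqn:psi}), I would invoke the Lévy–Itô decomposition specialised to the non-decreasing case. The jumps of $T$ form a Poisson random measure $N$ on $(0,\infty)\times(0,\infty)$ with intensity $dt\otimes \nu$; non-decrease rules out any Brownian component and restricts jumps to $(0,\infty)$, so one obtains $T(t)=\beta t+\int_0^t\!\!\int_0^\infty u\,N(ds,du)$ for some drift $\beta\geq 0$. The exponential formula for Poisson integrals then yields
\[
\EB\bigl(e^{-sT(t)}\bigr)=\exp\!\Bigl(-t\beta s-t\!\int_0^\infty (1-e^{-su})\,\nu(du)\Bigr),
\]
giving the asserted form of $\Psi$. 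The condition $\int_0^\infty \min(u,1)\,\nu(du)<\infty$ is precisely what is needed for the pathwise sum of jumps on $[0,t]$ to be a.s.\ finite and for the integral $\int (1-e^{-su})\nu(du)$ to converge (bounding $1-e^{-su}$ by $su$ near $0$ and by $1$ far away).

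For the converse, given $\beta\geq 0$ and a measure $\nu$ satisfying the integrability condition, I would construct the subordinator by truncation and limit. Fix $\epsilon>0$, let $\nu_\epsilon$ denote the restriction of $\nu$ to $(\epsilon,\infty)$ (a finite measure since $\nu((\epsilon,\infty))\leq \epsilon^{-1}\int_0^\infty \min(u,1)\,\nu(du)<\infty$), and define $T_\epsilon(t)=\beta t+J_\epsilon(t)$ where $J_\epsilon$ is the compound Poisson process associated with $\nu_\epsilon$. Then $T_\epsilon$ is a non-decreasing Lévy process with Laplace exponent $\Psi_\epsilon(s)=\beta s+\int_\epsilon^\infty(1-e^{-su})\nu(du)$. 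Letting $\epsilon\downarrow 0$, monotone convergence gives $\Psi_\epsilon(s)\uparrow \Psi(s)$ and $T_\epsilon(t)\uparrow T(t)$ a.s.\ to a finite limit, which one verifies is a subordinator with Laplace transform $e^{-t\Psi(s)}$ via dominated convergence.

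The main obstacle is the small-jump limit in the converse: ensuring that the accumulated contribution of infinitely many small positive jumps is a.s.\ finite rather than merely convergent in distribution. Here the decisive input is the stronger integrability $\int_0^1 u\,\nu(du)<\infty$ (as opposed to the $\int_0^1 u^2\,\nu(du)<\infty$ required for general Lévy processes), which allows the small-jump part to be defined as an absolutely convergent pathwise sum without any compensation, and which is in turn forced by the non-decrease of $T$. Everything else is bookkeeping with the Poisson exponential formula and monotone/dominated convergence.
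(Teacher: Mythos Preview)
Your sketch is a correct outline of the standard textbook argument (multiplicativity from independent stationary increments, Cauchy's equation for the exponent, L\'{e}vy--It\^{o} decomposition with no Brownian part, and compound-Poisson approximation for the converse). However, there is nothing in the paper to compare it against: the paper does not prove Lemma~\ref{lem:subord} at all. It is stated as a quoted background result with citations to \citet{Sato:1999} and \citet{Applebaum:2004}, and the authors proceed immediately to use it. Your approach is essentially the one found in those references, so in that sense it matches the ``paper's proof'' by proxy, but be aware that the present paper treats the lemma as an imported fact rather than something it establishes.
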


It is well known that the Laplace exponent $\Psi$ is Bernstein
and the corresponding Laplace transform $\exp(-t \Psi(s))$ is
completely monotone for any $t\geq0$~\citep{Applebaum:2004}.
Moreover, any function $g: (0, \infty)\to\RB$, with $g(0)=0$,
is a Bernstein function if and only if it has the representation
as in expression (\ref{eqn:psi}).
Clearly,
$\Psi$ as defined in expression (\ref{eqn:psi}) satisfies $\Psi(0)=0$.
As a result,
$\Psi$
is nonnegative, nondecreasing and concave on $(0, \infty)$.

\subsection{Subordinators for Nonconvex Penalty Functions}

We are given a set of training
data $\{({\bf x}_i, y_i): i=1,\ldots, n\}$, where
the ${\bf x}_i \in\RB^{p}$ are the input vectors and the $y_i$ are
the corresponding
outputs.
We now discuss the following\vadjust{\goodbreak} linear regression model:
\[
{\bf y}= {\bf X}{\bf b}+ \epsi,
\]
where ${\bf y}=(y_1, \ldots, y_n)^T$,
${\bf X}=[{\bf x}_1, \ldots, {\bf x}_n]^T$, and $\epsi$ is a Gaussian
error vector $N({\bf0}, \sigma{\bf I}_n)$.
We aim at
finding a sparse estimate of the vector of regression coefficients
${\bf b}
=( b_1, \ldots,\break b_p)^T$ by using a Bayesian nonconvex approach.

In particular,
we consider the following hierarchical model for the regression
coefficients $b_j$'s:
\begin{align*}
p(b_j | \eta_j, \sigma) \; & {\varpropto} \; \exp(-\eta_j
|b_j|/\sigma), \\
[\eta_j] & \stackrel{iid}{\sim} p(\eta_j), \\
\sigma& \sim\IGa(\alpha_{\sigma}/2, \beta_{\sigma}/2),
\end{align*}
where the $\eta_j$'s are referred to as latent shrinkage parameters,
and the inverse Gamma prior has the following parametrization:
\[
\IGa(\alpha_{\sigma}/2, \beta_{\sigma}/2) = \frac{(\beta_{\sigma
}/2)^{\alpha_{\sigma}/2}}{\Gamma(\frac{\alpha_{\sigma}}{2})}
\sigma^{-(\frac{\alpha_{\sigma}}{2}{+}1)} \exp\Big({-}\frac
{\beta_{\sigma}}{2\sigma} \Big).
\]
Furthermore, we regard $\eta_j$ as $T(t_j)$, that is, $\eta
_j=T(t_j)$. Here $\{T(t): t\geq0\}$ is defined as a subordinator.

Let $\Psi(s)$, defined on $(0, \infty)$, be the Laplace exponent of
the subordinator.
Taking $s=|b|$, it can be shown that $\Psi(|b|)$ defines a nonconvex
penalty function of $b$ on $(-\infty, \infty)$.
Moreover, $\Psi(|b|)$ is nondifferentiable at the origin because $\Psi
'(0^{+})>0$ and $\Psi'(0^{-})<0$.
Thus, it is able to induce sparsity. In this regard, $\exp(- t \Psi
(|b|))$ forms a prior for $b$.
From Lemma~\ref{lem:subord} it follows that the prior can be defined
via the Laplace transform. In summary, we have the following theorem.

\begin{theorem} \label{thm:lapexp00}
Let $\Psi$ be a nonzero Bernstein function on $(0, \infty)$. If
$\liml_{s \to0+} \Psi(s)=0$, then $\Psi(|b|)$
is a nondifferentiable and nonconvex function of $b$ on $(-\infty,
\infty)$. Furthermore,
\[
\exp(- t \Psi(|b|)) = \int_{0}^{\infty}{ \exp(- |b| \eta)
f_{T(t)} (\eta) d \eta}, \; t\geq0,
\]
where $\{T(t): t\geq0\}$ is some subordinator.
\end{theorem}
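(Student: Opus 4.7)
My plan is to reduce the three assertions in the theorem (integral representation, nondifferentiability at the origin, and nonconvexity) to direct consequences of the Bernstein representation~(\ref{eqn:psi}) and the converse direction of Lemma~\ref{lem:subord}.

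First, since $\Psi$ is a nonzero Bernstein function on $(0,\infty)$ with $\liml_{s\to 0^+}\Psi(s)=0$, I would invoke the Bernstein characterization stated after Lemma~\ref{lem:subord} to obtain a drift $\beta\geq 0$ and a L\'evy measure $\nu$ with $\int_0^\infty\min(u,1)\,\nu(du)<\infty$ for which~(\ref{eqn:psi}) holds. Feeding this $\Psi$ into the converse half of Lemma~\ref{lem:subord} produces a subordinator $\{T(t):t\geq 0\}$ whose density $f_{T(t)}$ satisfies $\EB(e^{-sT(t)})=e^{-t\Psi(s)}$ for all $s>0$. Specializing $s$ to $|b|\geq 0$ immediately yields the displayed Laplace representation of $\exp(-t\Psi(|b|))$ as a one-sided Laplace scale mixture against $f_{T(t)}$; the boundary case $b=0$ just reads as $\int f_{T(t)}\,d\eta=1$.

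For nondifferentiability I would differentiate~(\ref{eqn:psi}) under the integral to get $\Psi'(s)=\beta+\int_0^\infty u\,e^{-su}\,\nu(du)$ for $s>0$, then let $s\downarrow 0$ and use monotone convergence to conclude $\Psi'(0^+)=\beta+\int_0^\infty u\,\nu(du)\in(0,\infty]$, which is strictly positive because the pair $(\beta,\nu)$ is nontrivial. The even extension $b\mapsto\Psi(|b|)$ therefore has one-sided derivatives $+\Psi'(0^+)$ and $-\Psi'(0^+)$ at $b=0$, which differ, proving nondifferentiability and giving the cusp that drives sparsity. For nonconvexity I would use $\Psi''\leq 0$ on $(0,\infty)$ (a direct consequence of the Bernstein inequality at $n=2$), so $\Psi$ is concave on $(0,\infty)$; combined with $\Psi(0^+)=0$, a midpoint-convexity test at $0$ and $2b$ for any $b>0$ would force $\Psi(2b)\geq 2\Psi(b)$, contradicting the concave inequality $\Psi(2b)\leq 2\Psi(b)$ unless $\Psi$ is linear on $(0,\infty)$. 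Ruling out the linear case, $\Psi(|b|)$ cannot be convex on $(-\infty,\infty)$.

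The main technical obstacle I expect is essentially bookkeeping rather than conceptual: justifying differentiation under the L\'evy integral (dominated convergence with the bound $u\,e^{-su}\leq u\wedge(1/s)$ suffices) and handling the degenerate pure-drift case $\Psi(s)=\beta s$, for which $\Psi(|b|)=\beta|b|$ is in fact convex, so the nonconvexity assertion should be read as tacitly excluding this trivial subcase (equivalently, as requiring $\nu\not\equiv 0$). The substantive content, namely the integral identity, comes for free from Lemma~\ref{lem:subord}.
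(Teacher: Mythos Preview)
Your proposal is correct and essentially matches the paper's approach: the paper does not give a formal proof of Theorem~\ref{thm:lapexp00} at all, but simply states it as a summary of the paragraph preceding it, which invokes the converse direction of Lemma~\ref{lem:subord} for the integral identity and asserts nondifferentiability from $\Psi'(0^+)>0$, exactly as you do. Your observation about the pure-drift edge case $\Psi(s)=\beta s$ is well taken; the paper tacitly excludes it in the very next subsection by imposing $\beta=0$ (equivalently $\lim_{s\to\infty}\Psi(s)/s=0$), so the nonconvexity claim in the theorem should indeed be read modulo that degenerate case.
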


Recall that $T(t)$ is defined as the latent shrinkage parameter $\eta$
and in Section~\ref{sec:blrm} we will see that $t$ plays the same role as
the regularization parameter (or tuning hyperparameter). Thus,
there is an important connection
between the latent shrinkage parameter and the corresponding
regularization parameter; that is, $\eta=T(t)$. Because $\eta_j=T(t_j)$,
each latent shrinkage parameter $\eta_j$ corresponds to a
local regularization parameter $t_j$. Therefore we have a nonparametric Bayesian
formulation for the latent shrinkage parameters $\eta_j$'s.

It is also worth pointing out that
\[
{\exp(- t \Psi(|b|)) }
= 2 \int_{0}^{\infty}{L(b|0, (2\eta)^{-1}) \eta^{-1} f_{T(t)}(\eta
) d \eta},
\]
where $L(b|u, \eta)$ denotes a Laplace distribution with density given by
\[
p(b|u, \eta) = \frac{1}{4 \eta} \exp\Big(- \frac{1}{2 \eta}
|b-u|\Big).
\]
Thus, if $0<\int_{0}^{\infty}{ \eta^{-1} f_{T(t)}(\eta) d \eta} =
M < \infty$, then
$f_{T^{*}(t)} \triangleq\eta^{-1} f_{T(t)}(\eta)/M$ defines the
proper density of some random variable (denoted $T^{*}(t)$).
Subsequently, we obtain a proper prior $\exp(- t \Psi(|b|))/M$ for
$b$. Moreover,
this prior can be regarded as a Laplace scale mixture, i.e.,
the mixture of $L(b|0, (2\eta)^{-1})$ with mixing distribution
$f_{T^{*}(t)}(\eta)$.
If $\int_{0}^{\infty}{\eta^{-1} f_{T(t)}(\eta) d \eta} = \infty$,
then $f_{T^{*}(t)}$ is not a proper density.
Thus, $\exp(- t \Psi(|b|))$ is also improper as a prior of $b$.
However, we still treat
$\exp(- t \Psi(|b|))$ as the mixture of $L(b|0, (2\eta)^{-1})$ with
mixing distribution $f_{T^{*}(t)}(\eta)$.
In this case, we employ the terminology
of pseudo-priors for the density, which is also used by~\citet
{PolsonScottSVM:2011}.

\subsection{The Gamma and Poisson Subordinators}

Obviously, $\Psi(s)=s$ is Bernstein.
It is an extreme case,
because we have that $\beta=1$, $\nu(d u) =\delta_{0}(u) d u$ and
that $ f_{T(t)}(\cdot) =\delta_{t}(\cdot)$, where $\delta_{t}(\cdot)$
denotes the Dirac Delta measure at $t$, which corresponds to the
deterministic process $T(t)=t$.
We can exclude this case by assuming $\beta=0$ in expression (\ref
{eqn:psi}) to obtain a strictly concave Bernstein function.
In fact,
we can impose the condition $\liml_{s\rightarrow\infty} \frac{\Psi
(s)}{s} =0$. This in turn
leads to $\beta=0$
due to $\liml_{s\rightarrow\infty} \frac{\Psi(s)}{s} = \beta$.
In this paper we exploit Laplace exponents in nonconvex penalization
problems. For this purpose,
we will only consider a subordinator without drift, i.e., $\beta=0$.
Equivalently, we always assume that $\liml_{s\rightarrow\infty}
\frac{\Psi(s)}{s} =0$.

We here take the nonconvex LOG and EXP
penalties as two concrete examples~\citep[also see][]{ZhangNIPS:2012}.
The LOG penalty is defined by
\begin{equation} \label{eqn:logp}
\Psi(s) = \frac{1} {\xi} \log\big({\gamma} {s} {+}1 \big), \quad
\gamma, \; \xi> 0,
\end{equation}
while the EXP penalty is given by
\begin{equation} \label{eqn:exp}
\Psi(s) = \frac{1} {\xi} (1- \exp(- \gamma s )), \quad\gamma, \;
\xi> 0.
\end{equation}
Clearly, these two functions are Bernstein on $(0, \infty)$. Moreover,
they satisfy $\Psi(0)=0$ and
$\liml_{s\rightarrow\infty} \frac{\Psi(s)}{s}=\liml_{s\to\infty
} \Psi'(s)=0$.
It is also directly verified that
\[
\frac{1} {\xi} \log\big( {\gamma} s {+}1 \Big) = \int
_{0}^{\infty} {\big[1- \exp(- s u)\big] \nu(du) },
\]
where the L\'{e}vy measure $\nu$ is given by
\[
\nu(du) = \frac{1}{\xi u} \exp(- u/\gamma) d u.
\]
The corresponding subordinator $\{T(t):t\geq0\}$ is
a Gamma subordinator, because each $T(t)$ follows a Gamma distribution
with parameters $(t/\xi, \gamma)$,
with density given by
\[
f_{T(t)} (\eta) = \frac{\gamma^{-\frac{t}{\xi} } }{\Gamma({t}/
{\xi} )} \eta^{\frac{t}{ \xi} -1} \exp(- \gamma^{-1} \eta)\;
(\mbox{also denoted } \Ga(t/\xi, \gamma)).
\]

We also note that the corresponding pseudo-prior is given by
\[
\exp(- t \Psi(|b|)) = \big( {\gamma} {|b|}{+}1 \big)^{- t/ \xi}
\propto\int_{0}^{\infty}{ L(b|0, \eta^{-1}) \eta^{-1}
f_{T(t)}(\eta) d \eta}.
\]
Furthermore, if $t> \xi$, the pseudo-prior is a proper distribution,
which is the mixture of $L(b|0, \eta^{-1})$
with mixing distribution $\Ga(\eta| \xi^{-1} t {-}1, \gamma)$.

As for the EXP penalty, the L\'{e}vy measure is $\nu(d u) = \xi^{-1}
\delta_{\gamma}(u) d u$.
Since
\[
\int_{\RB} { \big[1{-} \exp({-} \gamma|b|) \big] d b} = \infty,
\]
then ${\xi^{-1}} [1- \exp(- \gamma|b|) ]$ is an improper prior of $b$.
Additionally,
$\{T(t):t\geq0 \}$ is a Poisson subordinator. Specifically, $T(t)$ is
a Poisson distribution with intensity $1/\xi$
taking values on the set $\{k \gamma: k\in\NB\cup\{0 \} \}$. That is,
\begin{equation} \label{eqn:possion}
\Pr(T(t) = k \gamma)= \frac{ (t/\xi)^k}{k!} e^{- t/\xi}, \mbox{
for } k=0,1,2, \ldots
\end{equation}
which we denote by $T(t) \sim\Po(1/\xi)$.

\section{Compound Poisson Subordinators}
\label{sec:gps}

In this section we explore the application of compound Poisson subordinators
in constructing nonconvex penalty functions.
Let $\{Z(k): k \in\NB\}$ be a sequence of independent and identically
distributed (i.i.d.) real valued
random variables with common law $\mu_{Z}$, and let $K \in\NB\cup\{
0 \}$
be a Poisson process with intensity $\lambda$ that is independent of
all the $Z(k)$.
Then $T(t) \triangleq Z(K(1)) + \cdots+ Z(K(t))$, for $t\geq0$,
follows a compound Poisson distribution with density $f_{T(t)}(\eta)$
(denoted $\CoP(\lambda t, \mu_Z)$), and hence $\{T(t): t\geq0\}$ is
called a compound Poisson process.
A compound Poisson process is a subordinator if and only if the $Z(k)$
are nonnegative random variables~\citep{Sato:1999}.
It is worth pointing out that if $\{T(t): t\geq0 \}$ is the Poisson
subordinator given in expression (\ref{eqn:possion}),
it is equivalent to saying that $T(t)$ follows $\CoP({t}/{\xi},
\delta_{\gamma})$.

We particularly study two families of nonnegative random variables $Z(i)$:
nonnegative continuous random variables and nonnegative discrete random
variables. Accordingly, we have continuous and discrete
compound Poisson subordinators $\{T(t): t\geq0 \}$.
We will show that both the Gamma and Poisson subordinators
are limiting cases of the compound Poisson subordinators.

\subsection{Compound Poisson Gamma Subordinators}

In the first family $Z(i)$
is a Gamma random variable. In particular, let $\lambda=\frac{\rho
{+}1}{\rho\xi}$ and the $Z(i)$ be i.i.d.\ from the $\Ga\big(\rho,
\frac{\rho{+}1}{\gamma}\big)$ distribution, where $\rho>0$, $\xi
>0$ and $\gamma>0$.
The compound Poisson subordinator can be written as follows
\[
T(t) = \left\{
\begin{array}{ll} Z(K(1)) + \cdots+ Z(K(t)) & \mbox{ if } K(t)>0, \\
0 & \mbox{ if } K(t)=0.
\end{array}
\right.
\]
The density of the subordinator is then given by
\begin{equation} \label{eqn:first_tt}
f_{T(t)}(\eta) = \exp\Big( {-}\frac{(\rho{+}1) t}{\rho\xi} \Big
) \bigg\{\delta_{0}(\eta) {+} \exp\Big({-}\frac{ (\rho{+}1) \eta
}{\gamma} \Big) \sum_{k=1}^{\infty}
\frac{(\rho{+}1)^{k (\rho{+}1)} (\frac{t}{\xi})^{k} (\frac{\eta
}{\gamma})^{k \rho} } {k! \rho^k \Gamma(k \rho) \eta} \bigg\}.
\end{equation}
We denote it by $\PG(t/\xi, \gamma, \rho)$. The mean and variance are
\[
\EB(T(t)) = \frac{\gamma t}{\xi} \quad\mbox{ and } \quad{\Var
}(T(t))= \frac{\gamma^2 t}{\xi},
\]
respectively.
The Laplace transform is given by
\[
\EB(\exp(-s T(t))) = \exp(- t \Psi_{\rho}(s)),
\]
where $\Psi_{\rho}$ is a Bernstein function of the form
\begin{equation} \label{eqn:first}
\Psi_{\rho}(s) = \frac{\rho{+}1}{\rho\xi}\Big[1- \big(1+ \frac
{\gamma}{\rho{+}1} s \big)^{-\rho} \Big].
\end{equation}
The corresponding L\'{e}vy measure is given by
\begin{equation} \label{eqn:first_nu}
\nu(d u) = \frac{\gamma}{\xi} \frac{((\rho{+}1)/\gamma)^{\rho
{+}1}}{\Gamma(\rho{+}1)} u^{\rho-1} \exp\Big( {-} \frac{\rho
{+}1}{\gamma} u\Big) d u.
\end{equation}
Notice that $\frac{\xi}{\gamma} u \nu(d u)$ is a Gamma measure for
the random variable $u$. Thus, the
L\'{e}vy measure $\nu(d u)$ is referred to as a generalized Gamma
measure~\citep{Brix:1999}.

The Bernstein function $\Psi_{\rho}(s)$ was studied by \citet
{Aalen:1992} for survival analysis.
However, we consider its application in sparsity modeling.
It is clear that $\Psi_{\rho}(s)$ for $\rho>0$ and $\gamma>0$
satisfies the conditions
$\Psi_{\rho}(0)=0$ and $\liml_{s\rightarrow\infty} \frac{\Psi
_{\rho}(s)}{s}=0$. Also, $\Psi_{\rho}(|b|)$ is a nonnegative and nonconvex
function of $b$ on $(-\infty, \infty)$, and it is an increasing function
of $|b|$ on $[0, \infty)$. Moreover, $\Psi_{\rho}(|b|)$ is
continuous w.r.t.\ $b$ but nondifferentiable at the origin. This
implies that $\Psi_{\rho}(|b|)$ can be treated as a sparsity-inducing penalty.

We are interested in the limiting cases that $\rho=0$ and $\rho
=+\infty$.

\begin{proposition} \label{pro:first} Let $\PG(t/\xi, \gamma, \rho
)$, $\Psi_{\rho}(s)$ and $\nu(du)$ be defined by expressions (\ref
{eqn:first_tt}), (\ref{eqn:first})
and (\ref{eqn:first_nu}), respectively. Then
\begin{enumerate}
\item[\emph{(1)}] $\liml_{\rho\to0+} \Psi_{\rho}(s) = \frac
{1}{\xi} \log\big({\gamma} s {+}1 \big)$ and $\liml_{\rho\to
\infty} \Psi_{\rho}(s) = \frac{1}{\xi} (1- \exp(- \gamma s))$;
\item[\emph{(2)}] $\liml_{\rho\to0+} \PG(t/\xi, \gamma, \rho)
= \Ga(t/\xi, \gamma)$ and $\liml_{\rho\to\infty} \PG(t/\xi,
\gamma, \rho) = \CoP(t/\xi, \delta_{\gamma})$;
\item[\emph{(3)}] $\liml_{\rho\to0+} \nu(du)
= \frac{1}{\xi u} \exp(- \frac{u}{\gamma}) d u$ and $\liml_{\rho
\to\infty} \nu(du) = \frac{1}{\xi}\delta_{\gamma}(u) d u$.
\end{enumerate}
\end{proposition}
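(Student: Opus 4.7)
The plan is to prove the three parts in the order stated, since (1) gives the Laplace-exponent convergence that (via the continuity theorem for Laplace transforms of positive measures) delivers (2), and a similar scaled–Gamma-to-point-mass argument delivers (3). Throughout I would keep $s>0$, $\gamma>0$, $\xi>0$ fixed and vary only $\rho$.

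For part (1), the $\rho\to 0+$ case proceeds by writing
\[
\Psi_\rho(s)=\frac{\rho+1}{\rho\xi}\bigl[1-\exp\bigl(-\rho\log(1+\tfrac{\gamma s}{\rho+1})\bigr)\bigr]
\]
and expanding the exponential to first order in $\rho$, since $\rho\log(1+\gamma s/(\rho+1))\to 0$. The leading term yields $\frac{\rho+1}{\xi}\log(1+\gamma s/(\rho+1))$, which tends to $\frac{1}{\xi}\log(1+\gamma s)$. For the $\rho\to\infty$ case I would use the standard limit $(1+\gamma s/(\rho+1))^{-\rho}=\bigl[(1+\gamma s/(\rho+1))^{(\rho+1)/(\gamma s)}\bigr]^{-\gamma s\rho/(\rho+1)}\to e^{-\gamma s}$, combined with $(\rho+1)/(\rho\xi)\to 1/\xi$, to obtain $\frac{1}{\xi}(1-e^{-\gamma s})$.

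For part (2), I would avoid manipulating the messy series in expression~(\ref{eqn:first_tt}) directly and instead invoke the uniqueness/continuity theorem for Laplace transforms on $[0,\infty)$: since $\EB(e^{-sT(t)})=\exp(-t\Psi_\rho(s))$ and part~(1) gives pointwise convergence of $\Psi_\rho$ on $(0,\infty)$ to the Bernstein functions of the LOG and EXP penalties respectively, the corresponding Laplace transforms converge to $(1+\gamma s)^{-t/\xi}$ and $\exp\bigl(-(t/\xi)(1-e^{-\gamma s})\bigr)$. These are exactly the Laplace transforms of $\Ga(t/\xi,\gamma)$ and $\CoP(t/\xi,\delta_\gamma)$ (as displayed in the Gamma- and Poisson-subordinator subsection), so weak convergence of the associated distributions follows.

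For part (3), I would rewrite
\[
\nu(du)=\frac{\rho+1}{\xi\rho}\cdot\frac{((\rho+1)/\gamma)^{\rho}}{\Gamma(\rho)}u^{\rho-1}\exp\!\bigl(-\tfrac{\rho+1}{\gamma}u\bigr)\,du,
\]
i.e., $\nu(du)=\frac{\rho+1}{\xi\rho}\,\Ga\bigl(u\mid\rho,(\rho+1)/\gamma\bigr)\,du$. As $\rho\to 0+$, $\Gamma(\rho+1)\to 1$, $((\rho+1)/\gamma)^{\rho+1}\to 1/\gamma$, $u^{\rho-1}\to u^{-1}$ and $e^{-(\rho+1)u/\gamma}\to e^{-u/\gamma}$, so the density converges pointwise on $(0,\infty)$ to $\frac{1}{\xi u}e^{-u/\gamma}$. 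As $\rho\to\infty$, the prefactor tends to $1/\xi$, while the inner Gamma density has mean $\rho\gamma/(\rho+1)\to\gamma$ and variance $\rho\gamma^2/(\rho+1)^2\to 0$, so it concentrates weakly at $u=\gamma$, giving $\nu(du)\to\frac{1}{\xi}\delta_\gamma(u)\,du$ in the vague/weak sense on $(0,\infty)$.

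The main obstacle I expect is the $\rho\to\infty$ limit in part~(3): the densities themselves do not converge in any pointwise sense (they must degenerate to a point mass), so I would need to phrase the convergence carefully as weak convergence of measures against continuous bounded test functions supported away from $0$, using the Gamma-distribution concentration argument. Parts (1) and (2) are routine once the continuity theorem is cited, and part~(1) for $\rho\to 0+$ requires only care in controlling the $1-e^{-x}\sim x$ expansion uniformly on compacts in $s$.
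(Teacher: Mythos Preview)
Your proposal is correct. The paper itself offers no proof beyond the single sentence ``This proposition can be obtained by using direct algebraic computations,'' so your write-up is already more detailed than what appears there.

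The one place where your route differs in spirit from a purely ``direct algebraic'' computation is part~(2): rather than taking termwise limits in the series density (\ref{eqn:first_tt}), you pass through Laplace transforms and invoke the continuity theorem. This is cleaner and avoids delicate manipulations of the infinite sum; it also makes the meaning of the limit (weak convergence of distributions) explicit, which the paper leaves implicit. Your treatment of part~(3), splitting $\nu(du)$ as a constant times a $\Ga(\rho,\cdot)$ density and using mean--variance concentration for the $\rho\to\infty$ case, is exactly the kind of argument one would supply to flesh out ``direct algebraic computations.'' The only cosmetic point is that your $\Ga(u\mid\rho,(\rho+1)/\gamma)$ uses a rate parametrization, whereas the paper's $\Ga(\alpha,\gamma)$ is a scale parametrization; this does not affect the argument but you may wish to align the notation.
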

This proposition can be obtained by using direct algebraic computations.
Proposition~\ref{pro:first} tells us that the limiting cases yield the
nonconvex LOG and EXP functions. Moreover, we see that
$T(t)$ converges in distribution to a Gamma random variable with shape
$t/\xi$ and scale $\gamma$, as $\rho\to0+$, and to a
Poisson random variable
with mean $t/\xi$, as $\rho\to\infty$.

It is well known that $\Psi_{0}$
degenerates to the LOG function~\citep{Aalen:1992,Brix:1999}. Here we
have shown that
$\Psi_{\rho}$ approaches to EXP as $\rho\to\infty$.
We list another special example in Table~\ref{tab:exam} when $\rho
=1$. We refer to the corresponding penalty as a \emph
{linear-fractional} (LFR) function.
For notational simplicity, we respectively replace $\gamma/2$ and $\xi
/2$ by $\gamma$ and $\xi$ in the LFR function.
The density of the subordinator for the LFR function is given by
\[
f_{T(t)}(\eta)= e^{-\frac{t}{\xi}} \Big\{ \delta_{0}({\eta}) +
e^{- \frac{\eta}{\gamma}} \frac{\sqrt{{t}/\xi}
I_1\big(2 \sqrt{ {t}\eta/(\xi\gamma) } \big) }{\gamma\sqrt{\eta
/\gamma}} \Big\}.
\]
We thus say
each $T(t)$ follows a squared Bessel process
without drift \citep{YuanAISM:2000},
which is a mixture of a Dirac delta measure and a randomized Gamma
distribution~\citep{FellerBook:1971}.
We denote the density of $T(t)$ by $\SB({t}/{\xi}, \gamma)$.

\begin{table}[!ht]\setlength{\tabcolsep}{1.0pt}
\begin{small}
\begin{center}
\caption{Bernstein functions LOG, EXP, LFR, and CEL, defined on $[0,
\infty)$, and the corresponding
L\'{e}vy measures and subordinators ($\xi>0$ and $\gamma> 0$).}
\label{tab:exam}
\begin{minipage} {15cm}
\begin{tabular}{lllll}
\hline
& Bernstein Functions & L\'{e}vy Measures $\nu(du)$ & Subordinators
$T(t)$ & Priors \\ \hline
LOG & $\Psi_{0}(s)=\Phi_{0}(s)= \frac{1}{\xi} \log\big({\gamma}
s {+}1 \big)$ & $ \frac{1}{\xi u} \exp( {-} \frac{u}{\gamma}) d u$
& $\Ga(t/\xi, \gamma)$ & Proper$^a$ \\
EXP & $\Psi_{\infty}(s)=\Phi_{\infty}(s)= \frac{1}{\xi} [1 {-}
\exp({-} \gamma s)]$ & $ \frac{1}{\xi} \delta_{\gamma}(u) d u$ &
$\CoP(t/\xi, \delta_{k \gamma})$ & Improper \\
LFR & $\Psi_{1}(s) = \frac{1}{\xi} \frac{\gamma s}{\gamma s {+}1}$
& $ \frac{1}{\xi\gamma} \exp(- \frac{u}{\gamma} ) d u$ & $ \SB
({t}/{\xi}, \gamma)$ & Improper \\
CEL & $\Phi_{1}(s) = \frac{1}{\xi} \log[2 {-}\exp({-}\gamma s)]$ &
$ \frac{1}{\xi}\sum_{k=1}^{\infty} \frac{1}{k 2^k} \delta_{k
\gamma}(u) d u$ & $\NoB({t}/{\xi}, {1}/{2}, \delta_{k \gamma})$ &
Improper \\\hline\\[-9pt]
\multicolumn{5}{@{\quad}l}{$^a$It is proper only when $t>\xi$.}\\
\end{tabular}
\end{minipage}
\end{center}
\end{small}
\end{table}

\subsection{Negative Binomial Subordinators}

In the second case, we consider a family of discrete compound Poisson
subordinators. Particularly,
$Z(i)$ is discrete and takes values on $\{k \alpha: k\in\NB\cup\{0\}
\}$. And it
is defined as logarithmic distribution $\log(1{-}q)$, where $\alpha
\neq0$ and $q\in(0, 1)$, with probability mass function
given by
\[
\Pr(Z(i)= k \alpha) = - \frac{(1-q)^k}{k \log(q)}.\vadjust{\goodbreak}
\]
Moreover, we let $K(t)$ have a Poisson distribution with intensity
$-(\rho{+}1) \log(q)/\xi$, where $\rho>0$.
Then $T(t)$ is distributed according to a negative binomial (NB)
distribution~\citep{Sato:1999}. The probability mass function of
$T(t)$ is given by
\begin{equation} \label{eqn:second_tt}
\Pr(T(t)=k \alpha) = \frac{\Gamma(k{+}(\rho+1) t/\xi)}{k! \Gamma
((\rho+1) t/\xi)} q^{\frac{(\rho+1) t}{\xi}} (1-q)^k,
\end{equation}
which is denoted as $\NoB((\rho{+}1) t/\xi, q, \delta_{k \alpha
})$. We thus say that $T(t)$ follows an NB subordinator.
Let $q=\frac{\rho}{\rho+1}$ and $\alpha=\frac{\rho}{\rho+1}
\gamma$. It can be verified that $\NoB\big((\rho{+}1) t/\xi, \frac
{\rho}{\rho+1}, \delta_{ \frac{k \gamma\rho}{\rho+1}}\big)$ has
the same mean and variance as the $\PG(t/\xi, \gamma, \rho)$ distribution.
The corresponding Laplace transform then
gives rise to a new family of Bernstein functions, which is given by
\begin{equation} \label{eqn:second}
\Phi_{\rho}(s) \triangleq\frac{\rho{+}1}{\xi} \log\Big[\frac{1
{+} \rho}{\rho} - \frac{1}{\rho}
\exp(-\frac{\rho}{\rho{+}1} \gamma s)\Big].
\end{equation}
We refer to this family of Bernstein functions as \emph{compound
EXP-LOG} (CEL) functions.
The first-order derivative of $\Phi_{\rho}(s)$ w.r.t.\ $s$ is given by
\[
\Phi_{\rho}'(s) = \frac{\gamma}{\xi} \frac{\rho\exp(-\frac
{\rho}{\rho{+}1} \gamma s)}{{1{+}\rho} - \exp(-\frac{\rho}{\rho
{+}1} \gamma s)}.
\]

The L\'{e}vy measure for $\Phi_{\rho}(s)$ is given by
\begin{equation} \label{eqn:second_nu}
\nu(d u) = \frac{\rho+1}{\xi} \sum_{k=1}^{\infty} \frac{1}{k
(1{+}\rho)^k} \delta_{ \frac{k \gamma\rho}{\rho{+}1}}(u)du.
\end{equation}
The proof is given in Appendix~1. 
We call this L\'{e}vy measure a \emph{generalized Poisson measure}
relative to the generalized Gamma measure.

Like $\Psi_{\rho}(s)$, $\Phi_{\rho}(s)$ can define a family of
sparsity-inducing nonconvex penalties.
Also, $\Phi_{\rho}(s)$ for $\rho>0$, $\xi>0$ and $\gamma>0$
satisfies the conditions
$\Phi_{\rho}(0)=0$, $\liml_{s\rightarrow\infty} \frac{\Phi_{\rho
}(s)}{s}=0$ and $\liml_{s\to0} \Phi'_{\rho}(s)=\frac{\gamma}{\xi}$.
We present a special CEL function $\Phi_{1}$ as well as the
corresponding $T(t)$ and $\nu(du)$ in Table~\ref{tab:exam}, where we
replace $\xi/2$ and $\gamma/2$ by $\xi$ and $\gamma$ for notational
simplicity.
We now consider the limiting cases.

\begin{proposition} \label{pro:8} Assume
$\nu(du)$ is defined by expression (\ref{eqn:second_nu}) for fixed
$\xi>0$ and $\gamma>0$. Then we have that
\begin{enumerate}
\item[\emph{(a)}] $\liml_{\rho\to\infty} \Phi_{\rho}(s) = \frac
{1}{\xi}(1-\exp(-\gamma s))$ and $\liml_{\rho\to0+} \Phi_{\rho
}(s) = \frac{1}{\xi}\log(1+\gamma s)$.
\item[\emph{(b)}] $\liml_{\rho\to\infty} \Phi'_{\rho}(s) =
\frac{\gamma}{\xi} \exp(- \gamma s)$ and $\liml_{\rho\to0+} \Phi
'_{\rho}(s) = \frac{\gamma}{\xi} \frac{1}{1+\gamma s}$.
\item[\emph{(c)}] $\liml_{\rho\to\infty} \nu(du) = \frac{1}{\xi
}\delta_{\gamma}(u) d u$ and $\liml_{\rho\to0+} \nu(du)
= \frac{1}{\xi u} \exp(- \frac{u}{\gamma}) d u$.\vadjust{\eject}
\item[\emph{(d)}] $\liml_{\rho\to\infty} \NoB\big({(\rho{+}1)
t}/{\xi}, {\rho}/{(\rho{+}1)}, \delta_{{k \rho\gamma}/{(\rho
{+}1)}}\big)
= \CoP({t}/{\xi}, \delta_{\gamma})$ and
\[
\lim_{\rho\to0+} \Pr(T(t)\leq\eta) = \int_{0}^{\eta} {\frac
{\gamma^{-t/\xi} }{\Gamma(t/\xi)} u^{\frac{t}{\xi}-1} \exp
(-\frac{u}{\gamma}) d u}.
\]
\end{enumerate}
\end{proposition}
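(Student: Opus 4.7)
The plan is to derive (a) directly from Taylor expansions, obtain (b) as a routine consequence of the closed-form derivative already displayed in the paper, get (d) from (a) via the Lévy continuity theorem for Laplace transforms, and handle (c) by observing that convergence of Laplace exponents forces vague convergence of the associated Lévy measures on $(0,\infty)$ (with the limiting cases in (c) being cross-checked directly from the atomic representation of $\nu$).

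For part (a), first rewrite
\[
\Phi_{\rho}(s) = \frac{\rho+1}{\xi}\,\log\Bigl[1 + \tfrac{1}{\rho}\bigl(1 - e^{-\rho\gamma s/(\rho+1)}\bigr)\Bigr].
\]
As $\rho \to \infty$, the argument of $\log$ is $1 + O(1/\rho)$, so $\log(\cdot) \sim \tfrac{1}{\rho}(1 - e^{-\rho\gamma s/(\rho+1)})$, and multiplying by $(\rho+1)/\xi$ gives $\tfrac{1}{\xi}(1-e^{-\gamma s})$ since $\rho\gamma s/(\rho+1)\to\gamma s$. As $\rho \to 0+$, expand $e^{-\rho\gamma s/(\rho+1)} = 1 - \tfrac{\rho\gamma s}{\rho+1} + O(\rho^2)$, so $\tfrac{1}{\rho}(1-e^{-\rho\gamma s/(\rho+1)}) = \tfrac{\gamma s}{\rho+1} + O(\rho)$, whence $\Phi_\rho(s) = \tfrac{\rho+1}{\xi}\log\bigl(1+\tfrac{\gamma s}{\rho+1}+O(\rho)\bigr) \to \tfrac{1}{\xi}\log(1+\gamma s)$. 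Part (b) is an immediate substitution: in the displayed formula $\Phi'_\rho(s) = \tfrac{\gamma}{\xi}\cdot\tfrac{\rho e^{-\rho\gamma s/(\rho+1)}}{1+\rho - e^{-\rho\gamma s/(\rho+1)}}$, send $\rho\to\infty$ to get $\tfrac{\gamma}{\xi} e^{-\gamma s}$ (numerator $\sim \rho e^{-\gamma s}$, denominator $\sim \rho$), and send $\rho\to 0+$ using $1+\rho - e^{-\rho\gamma s/(\rho+1)} = \rho(1+\gamma s) + O(\rho^2)$ to get $\tfrac{\gamma}{\xi}\cdot\tfrac{1}{1+\gamma s}$.

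For (c), the $\rho \to \infty$ direction is transparent from the atomic form of $\nu$: the $k=1$ atom has mass $\tfrac{1}{\xi}$ at $\gamma\rho/(\rho+1)\to\gamma$, while for $k\ge 2$ the mass $\tfrac{1}{\xi k (1+\rho)^{k-1}}$ vanishes, so $\nu \Rightarrow \tfrac{1}{\xi}\delta_\gamma$. The $\rho\to 0+$ direction is more delicate because every atom $k\gamma\rho/(\rho+1)$ collapses to $0$ and the atoms become dense. I would argue via the Laplace exponent: the identity $\Phi_\rho(s) = \int_0^\infty(1-e^{-su})\nu_\rho(du)$, together with (a) and the fact that $\tfrac{1}{\xi}\log(1+\gamma s) = \int_0^\infty(1-e^{-su})\tfrac{1}{\xi u}e^{-u/\gamma}\,du$, implies (by uniqueness of the Lévy measure determined by a Bernstein function with no drift) vague convergence of $\nu_\rho$ on $(0,\infty)$ to $\tfrac{1}{\xi u}e^{-u/\gamma}\,du$. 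For readers who prefer a direct check, the atom $u_k = k\gamma\rho/(\rho+1)$ at spacing $\Delta u = \gamma\rho/(\rho+1)$ carries mass whose density-per-length is $\tfrac{\rho+1}{\xi u_k(1+\rho)^k}$; writing $k = u_k(\rho+1)/(\gamma\rho)$ and using $(\rho+1)\log(1+\rho)/\rho \to 1$ as $\rho\to 0+$ yields $(1+\rho)^k \to e^{u/\gamma}$, giving the claimed density.

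Finally, part (d) follows from (a) combined with the Lévy continuity theorem for Laplace transforms on $[0,\infty)$. By definition, $\mathbb{E}(e^{-s T(t)}) = e^{-t\Phi_\rho(s)}$; part (a) shows pointwise convergence of $e^{-t\Phi_\rho(s)}$ on $(0,\infty)$ to $e^{-t(1-e^{-\gamma s})/\xi}$ as $\rho\to\infty$ and to $(1+\gamma s)^{-t/\xi}$ as $\rho\to 0+$. The former is the Laplace transform of $\CoP(t/\xi,\delta_\gamma)$ and the latter is that of $\Ga(t/\xi,\gamma)$; continuity at $s=0$ (both limits equal $1$) is immediate, so convergence in distribution holds, and the second statement in (d) is just the cumulative form. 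The only real obstacle is the rigorous justification of (c) at $\rho\to 0+$; the cleanest route is the Bernstein-function uniqueness argument above rather than a direct measure-theoretic limit of the discrete atoms.
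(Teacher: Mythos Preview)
Your proof is correct and broadly parallel to the paper's, but there are two places where the routes diverge and it is worth recording what each buys.

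For part (c) as $\rho\to 0+$, you argue via pointwise convergence of the Bernstein functions $\Phi_\rho$ and invoke uniqueness of the L\'evy representation to conclude vague convergence of $\nu_\rho$. The paper instead works with the \emph{derivative}: since $\Phi'_\rho(s)=\int_0^\infty e^{-su}\,u\,\nu_\rho(du)$ is the ordinary Laplace transform of the finite measure $u\,\nu_\rho(du)$, part (b) plus the standard continuity theorem for Laplace transforms gives $u\,\nu_\rho(du)\to \xi^{-1}e^{-u/\gamma}\,du$ directly, hence $\nu_\rho(du)\to (\xi u)^{-1}e^{-u/\gamma}\,du$. This sidesteps the mild subtlety you flag about the kernel $1-e^{-su}$ and makes (b) do real work rather than being a corollary. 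The paper also records the Riemann-sum heuristic you sketch at the end, so your direct check matches theirs.

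For part (d) as $\rho\to 0+$, you appeal to the L\'evy continuity theorem applied to $\mathbb{E}(e^{-sT(t)})=e^{-t\Phi_\rho(s)}$; the paper notes this route works but then supplies, as an ``alternative proof'', a hands-on lemma showing that if $X\sim\mathrm{Nb}(r,p)$ with $r$ fixed then $pX$ converges in distribution to $\Ga(r,1)$ as $p\to 0$, via Stirling's approximation on the ratio $\Gamma(k+r)/\Gamma(k+1)$. Your argument is shorter and reuses (a); the paper's is self-contained and avoids any appeal to continuity theorems.
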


Notice that
\begin{align*}
\lim_{\rho\to0+} \int_{0}^{\infty}{\exp(- u s) u \nu(d u)} =
\lim_{\rho\to0+} \Phi_{\rho}'(s)
= \frac{\gamma}{\xi} \frac{1}{1 {+} \gamma s}
= \frac{1}{\xi}\int_{0}^{\infty}{\exp\Big( {-} u s {-}\frac
{u}{\gamma} \Big) d u}.
\end{align*}
This shows that $\nu(d u)$ converges to $\frac{1}{\xi}u^{-1} \exp
(-\frac{u}{\gamma})$,
as $\rho\to0$. Analogously, we obtain the second part of
Proposition~\ref{pro:8}-(d), which implies that as $\rho\to0$,
$T(t)$ converges in distribution to a Gamma random variable with shape
parameter $t/\xi$ and scale parameter~$\gamma$.
An alternative proof is given in Appendix~2. 

Proposition~\ref{pro:8} shows that $\Phi_{\rho}(s)$ degenerates to
EXP as $\rho\to\infty$, while to LOG as $\rho\to0$.
This shows an interesting connection between $\Psi_{\rho}(s)$ in
expression (\ref{eqn:first}) and $\Phi_{\rho}(s)$
in expression (\ref{eqn:second}); that is,
they have the same limiting behaviors.

\subsection{Gamma/Poisson Mixture Processes}

We note that for $\rho> 0$,
\[
\Psi_{\rho}(s) = \frac{\rho{+}1}{\rho\xi}\Big[1- \exp\Big
({-}\rho\log(\frac{\gamma s}{\rho{+}1}+1)\Big) \Big]
\]
which is a composition of the LOG and EXP functions,
and that
\[
\Phi_{\rho}(s) = \frac{\rho{+}1}{\xi}\log\Big[1 {+} \frac
{1}{\rho}(1{-}\exp({-}\frac{\rho}{\rho{+}1} \gamma s))\Big]
\]
which is a composition of the EXP and LOG functions. In fact, the
composition of any two Bernstein functions is still Bernstein.
Thus, the composition is also the Laplace exponent of some
subordinator, which is then a mixture of the subordinators
corresponding to
the original two Bernstein functions~\citep{Sato:1999}. This leads us
to an alternative derivation for the subordinators corresponding to
$\Psi_{\rho}$ and $\Phi_{\rho}$. That is, we have the following
theorem whose proof is given in Appendix~3. 

\begin{theorem} \label{thm:poigam}
The subordinator $T(t)$ associated with $\Psi_{\rho}(s)$
is distributed according to the mixture of $\Ga(k\rho, \gamma/(\rho
{+}1))$ distributions with $\Po(k|(\rho{+}1) t/(\rho\xi))$ mixing,
while $T(t)$ associated with $\Phi_{\rho}(s)$ is distributed
according to the mixture of $\CoP(\lambda, \delta_{k \rho\gamma
/{(\rho{+}1})})$ distributions with $\Ga(\lambda|(\rho{+}1)t/\xi,
1/\rho)$ mixing.
\end{theorem}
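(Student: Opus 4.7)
The key tool is the subordination (Bochner) principle: if $A$ and $B$ are independent subordinators with Laplace exponents $\Psi_A$ and $\Psi_B$, then the time-changed process $T(t)=A(B(t))$ is itself a subordinator, and by conditioning
\[
\EB[e^{-sT(t)}]=\EB[e^{-B(t)\Psi_A(s)}]=e^{-t\Psi_B(\Psi_A(s))},
\]
so its Laplace exponent is the composition $\Psi_B\circ\Psi_A$. The paper has already exhibited $\Psi_\rho$ and $\Phi_\rho$ as compositions of a LOG and an EXP Bernstein function, and Section~\ref{sec:levy} together with Table~\ref{tab:exam} identifies the subordinators corresponding to LOG and EXP as Gamma and Poisson processes, respectively. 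The theorem is then a parameter-matching exercise.

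\textbf{The $\Psi_\rho$ case.} Write $\Psi_\rho = f\circ g$ with $g(s)=\rho\log(1+\gamma s/(\rho{+}1))$ and $f(u)=\tfrac{\rho{+}1}{\rho\xi}(1-e^{-u})$. Comparing with (\ref{eqn:logp}) and (\ref{eqn:exp}), $g$ is the LOG Bernstein function with parameters $(\xi_g,\gamma_g)=(1/\rho,\gamma/(\rho{+}1))$, and $f$ is the EXP Bernstein function with parameters $(\xi_f,\gamma_f)=(\rho\xi/(\rho{+}1),1)$. Hence, from Table~\ref{tab:exam}, the inner subordinator is $A(u)\sim\Ga(u/\xi_g,\gamma_g)=\Ga(\rho u,\gamma/(\rho{+}1))$ and the outer one is the Poisson subordinator $B(t)\sim\Po((\rho{+}1)/(\rho\xi))$ living on the lattice $\{0,1,2,\ldots\}$. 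Setting $T(t)=A(B(t))$ and conditioning on $B(t)=k$ (which has probability $\Po(k\mid(\rho{+}1)t/(\rho\xi))$) yields $T(t)\mid\{B(t)=k\}\sim\Ga(k\rho,\gamma/(\rho{+}1))$. The $k=0$ term recovers the atom $\delta_0(\eta)$ appearing in the density (\ref{eqn:first_tt}).

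\textbf{The $\Phi_\rho$ case.} Write $\Phi_\rho = f\circ g$ with $g(s)=1-\exp(-\rho\gamma s/(\rho{+}1))$ (EXP with $(\xi_g,\gamma_g)=(1,\rho\gamma/(\rho{+}1))$) and $f(u)=\tfrac{\rho{+}1}{\xi}\log(1+u/\rho)$ (LOG with $(\xi_f,\gamma_f)=(\xi/(\rho{+}1),1/\rho)$). Now the inner process is the Poisson subordinator $A(\lambda)\sim\CoP(\lambda,\delta_{\rho\gamma/(\rho{+}1)})$ and the outer one is the Gamma subordinator $B(t)\sim\Ga((\rho{+}1)t/\xi,1/\rho)$. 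Conditioning $T(t)=A(B(t))$ on $B(t)=\lambda$ yields the claimed mixture of $\CoP(\lambda,\delta_{k\rho\gamma/(\rho{+}1)})$ distributions with $\Ga(\lambda\mid(\rho{+}1)t/\xi,1/\rho)$ mixing.

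\textbf{Main obstacle.} The subordination identity is classical~\citep{Sato:1999}; the only genuine obstacle is bookkeeping. One must carefully align the composition order $\Psi_B\circ\Psi_A$ with the process-level order $A(B(t))$—so that the \emph{outer} Bernstein function supplies the clock $B$ while the \emph{inner} function supplies the jumps via $A$—and then verify that the multiplicative and exponential constants appearing inside $\Psi_\rho$ and $\Phi_\rho$ translate to the $(\xi,\gamma)$ parametrization of Table~\ref{tab:exam} consistently with the shape/rate convention of the Gamma distribution. Once this matching is pinned down, the mixture representations drop out of a one-line conditioning on the outer clock.
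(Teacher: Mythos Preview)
Your argument is correct, and it is the conceptual route the authors allude to just before stating the theorem (the composition of Bernstein functions corresponding to subordination of the associated processes). However, the paper's actual proof in Appendix~3 does \emph{not} invoke the Bochner subordination identity. Instead it proceeds by direct verification: for $\Psi_\rho$ it writes out the mixture density $\sum_{k\ge 0}\Ga(\eta\mid k\nu,\beta)\,\Po(k\mid\lambda)$ explicitly, manipulates the sum, and matches parameters to recover the $\PG$ density of~(\ref{eqn:first_tt}); for $\Phi_\rho$ it integrates $\Po(k\mid\lambda\phi)$ against $\Ga(\lambda\mid\psi,1/\beta)$ and recognizes the result as the negative-binomial pmf~(\ref{eqn:second_tt}). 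Your approach buys conceptual transparency and avoids any density bookkeeping: once the subordination identity $\EB[e^{-sA(B(t))}]=e^{-t\Psi_B(\Psi_A(s))}$ is granted, the proof really is just parameter matching against Table~\ref{tab:exam}. The paper's approach is more self-contained (no appeal to the general subordination theorem) and, as a side benefit, re-derives the explicit density formula for $\PG(t/\xi,\gamma,\rho)$ as a Poisson-weighted Gamma sum.
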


Additionally, the following theorem illustrates a limiting property of
the subordinators as $\gamma$ approaches 0.\vadjust{\goodbreak}

\begin{theorem} \label{thm:limit} Let $\rho$ be a fixed constant on
$[0, \infty]$.
\begin{enumerate}
\item[\emph{(a)}] If $T(t) \sim\PG({t}/{\xi}, \gamma, \rho)$
where $\xi=\frac{\rho{+}1}{\rho}\Big[1 - (1{+}\frac{\gamma}{\rho
{+}1})^{{-}\rho} \Big]$ or $\xi=\gamma$,
then $T(t)$ converges in probability to $t$, as $\gamma\to0$.
\item[\emph{(b)}] If $T(t) \sim\NoB((\rho{+}1) t/\xi, \rho/(\rho
{+}1), \delta_{k \rho\gamma/(\rho{+}1)})$ where
\[
\xi= {(\rho{+}1)} \log\Big[1 {+} \frac{1}{\rho}(1{-}\exp
({-}\frac{\rho}{\rho{+}1} \gamma))\Big]
\]
or $\xi=\gamma$,
then $T(t)$ converges in probability to $t$, as $\gamma\to0$.
\end{enumerate}
\end{theorem}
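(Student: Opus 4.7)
The plan is to prove convergence in probability of $T(t)$ to the constant $t$ via the continuity theorem for Laplace transforms: since the limit is a degenerate distribution at $t$, it suffices to check that, for each fixed $s > 0$, $\EB(e^{-s T(t)}) \to e^{-st}$. By Lemma~\ref{lem:subord}, $\EB(e^{-s T(t)}) = e^{-t\Psi_{\rho}(s)}$ in part (a) and $e^{-t\Phi_{\rho}(s)}$ in part (b), so the whole problem reduces to showing $\Psi_{\rho}(s) \to s$ (resp.\ $\Phi_{\rho}(s) \to s$) as $\gamma \to 0$, under either choice of $\xi$.

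For part (a) with $0 < \rho < \infty$ and $\xi = \gamma$, substituting into expression~(\ref{eqn:first}) gives
\[
\Psi_{\rho}(s) = \frac{\rho{+}1}{\rho \gamma}\Bigl[1 - \Bigl(1 + \frac{\gamma s}{\rho{+}1}\Bigr)^{-\rho}\Bigr],
\]
and a first-order Taylor expansion $\bigl(1+\frac{\gamma s}{\rho{+}1}\bigr)^{-\rho} = 1 - \frac{\rho \gamma s}{\rho{+}1} + O(\gamma^{2})$ immediately yields $\Psi_{\rho}(s) \to s$. For the other choice $\xi = \frac{\rho{+}1}{\rho}[1 - (1+\frac{\gamma}{\rho{+}1})^{-\rho}]$, the same expansion appears in both numerator and denominator of
\[
\Psi_{\rho}(s) = \frac{1 - (1 + \frac{\gamma s}{\rho{+}1})^{-\rho}}{1 - (1 + \frac{\gamma}{\rho{+}1})^{-\rho}} = \frac{\frac{\rho \gamma s}{\rho{+}1} + O(\gamma^{2})}{\frac{\rho \gamma}{\rho{+}1} + O(\gamma^{2})} \longrightarrow s.
\]
The boundary cases $\rho = 0$ (using $\Psi_{0}(s) = \xi^{-1}\log(1+\gamma s)$) and $\rho = \infty$ (using $\Psi_{\infty}(s) = \xi^{-1}(1-e^{-\gamma s})$) are handled by analogous expansions of $\log(1+x)$ and $1-e^{-x}$ at $x=0$, together with the matching expressions of $\xi$ obtained from Proposition~\ref{pro:first}(1).

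Part (b) is handled identically, working with $\Phi_{\rho}$ in expression~(\ref{eqn:second}). For $\xi = \gamma$, expanding $1 - \exp(-\frac{\rho \gamma s}{\rho{+}1}) = \frac{\rho \gamma s}{\rho{+}1} + O(\gamma^{2})$ gives the argument of the outer logarithm as $1 + \frac{\gamma s}{\rho{+}1} + O(\gamma^{2})$, so $\log(\cdot) = \frac{\gamma s}{\rho{+}1} + O(\gamma^{2})$, and multiplying by $\frac{\rho{+}1}{\gamma}$ yields $s$ in the limit. The alternative $\xi = (\rho{+}1)\log[1 + \frac{1}{\rho}(1 - e^{-\frac{\rho\gamma}{\rho{+}1}})]$ again produces a ratio in which the leading terms cancel to leave $\Phi_{\rho}(s)/1 \to s$, and $\rho \in \{0,\infty\}$ are covered by Proposition~\ref{pro:8}(a).

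The main obstacle is routine rather than conceptual: one must carry the Taylor expansions to a high enough order to guarantee that the $O(\gamma^{2})$ remainders are genuinely negligible compared with the leading $O(\gamma)$ terms in both numerator and denominator when $\xi$ is itself an $O(\gamma)$ quantity, and one must separately interpret the endpoint exponents $\rho = 0$ and $\rho = \infty$, where the form $(\cdot)^{-\rho}$ degenerates and must be replaced by its LOG/EXP limit from Propositions~\ref{pro:first} and~\ref{pro:8}. Once pointwise convergence of the Laplace exponents is established, the continuity theorem for Laplace transforms on $[0,\infty)$ delivers convergence in distribution to $\delta_{t}$, which for a constant limit is equivalent to convergence in probability.
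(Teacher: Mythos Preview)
Your argument is correct, but it takes a genuinely different route from the paper's. The paper's proof (Appendix~4) does not touch Laplace transforms at all: it simply uses the previously stated moments $\EB(T(t))=\gamma t/\xi$ and $\Var(T(t))=\gamma^{2}t/\xi$, observes that with $\xi=\gamma$ these become $t$ and $\gamma t$, and applies Chebyshev's inequality to get $\Pr(|T(t)-t|\geq\epsilon)\leq\gamma t/\epsilon^{2}\to0$. The alternative choice of $\xi$ is disposed of in one line by noting that $\xi/\gamma\to1$, so the same mean--variance bookkeeping goes through; part~(b) is declared ``similar.'' Your approach via the continuity theorem is equally valid and arguably more self-contained: it does not rely on having the moment formulas on hand, it treats both choices of $\xi$ on the same footing (as ratios of matching Taylor expansions) rather than reducing one to the other, and it makes transparent why the result holds---namely because $\Psi_{\rho}(s)\to s$ pointwise, which is exactly the statement that the penalty degenerates to the lasso penalty. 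The trade-off is brevity: the paper's Chebyshev argument is three lines, while yours requires managing the expansions, the continuity theorem, and the equivalence of convergence in distribution and in probability for constant limits.
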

The proof is given in Appendix~4. 
Since ``$T(t)$ converges in probability to $t$" implies ``$T(t)$
converges in distribution to $t$," we have that
\[
\liml_{\gamma\to0} \PG(t/\xi, \gamma, \rho)
\overset{d}{=} \delta_t
\mbox{ and }
\liml_{\gamma\to0}\NoB((\rho{+}1) t/\xi, \rho/(\rho{+}1),
\delta_{k \rho\gamma/(\rho{+}1)}) \overset{d}{=} \delta_{t}.
\]

Finally, consider the four nonconvex penalty function given in
Table~\ref{tab:exam}.
We present the following property. That is,
when $\xi=\gamma$ and for any fixed $\gamma>0$, we have
\begin{equation} \label{eqn:relat}
\frac{1}{\gamma} \log[2 {-}\exp({-}\gamma s)] \leq\frac{s}{\gamma
s {+}1} \leq\frac{1}{\gamma} [1 {-} \exp( {-} \gamma s)] \leq\frac
{1}{\gamma} \log\big({\gamma} s {+}1 \big)
\leq s,
\end{equation}
with equality only when $s=0$. The proof is given in Appendix~5. %
This property is also illustrated in Figure~\ref{fig:penalty}.

\section{Bayesian Linear Regression with Latent Subordinators}
\label{sec:blrm}

\begin{figure}[!ht]
\includegraphics{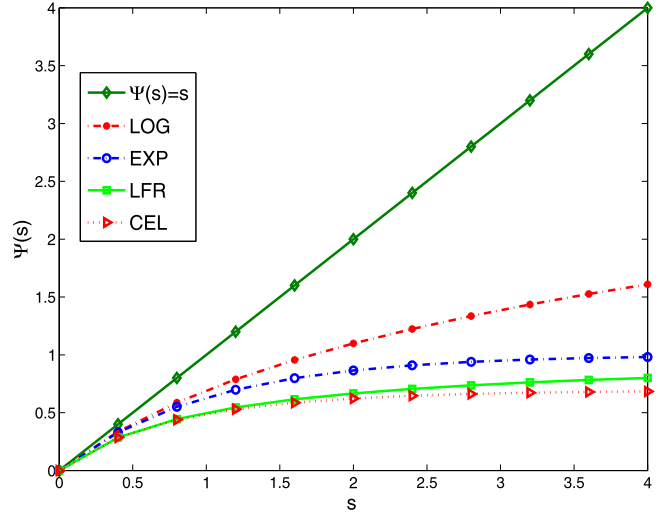}
\caption{The four nonconvex penalties $\Psi(s)$ in Table~\ref
{tab:exam} with $\xi= \gamma=1$ and $\Psi(s)=s$.}
\label{fig:penalty}
\end{figure}

We apply the compound Poisson subordinators to the Bayesian sparse
learning problem given in Section~\ref{sec:levy}.
Defining $T(t)=\eta$,
we rewrite the hierarchical representation for the
joint prior of the $b_j$ under the regression framework. That is, we
assume that
\begin{eqnarray*}
[b_j|\eta_j, \sigma] & \stackrel{ind}{\sim} & L(b_j|0, \sigma
(2\eta_j)^{-1}), \\
f_{T^{*}(t_j)}(\eta_j) & {\propto} & \eta_j^{-1} f_{T(t_j)}(\eta_j),
\end{eqnarray*}
which implies that
\[
p(b_j, \eta_j|t_j, \sigma) \; {\propto} \; \sigma^{-1} \exp\Big
(-\frac{\eta_j}{\sigma} |b_j| \Big) f_{T(t_j)}(\eta_j).
\]
The joint marginal pseudo-prior of the $b_j$'s is given by
\begin{align*}
p^{*}({\bf b}|{\bf t}, \sigma) & = \prod_{j=1}^p p^{*}(b_j|{t_j},
\sigma
) = \prod_{j=1}^p \sigma^{-1} \int_{0}^{\infty}{\exp\Big(-\frac
{\eta_j}{\sigma} |b_j| \Big) f_{T(t_j)}(\eta_j) d \eta_j} \\
& = \prod_{j=1}^p \sigma^{-1} \exp\Big(-t_j \Psi\Big(\frac
{|b_j|} {\sigma} \Big) \Big).
\end{align*}
We will see in Theorem~\ref{thm:poster} that the full conditional
distribution $p({\bf b}| \sigma, {\bf t}, {\bf y})$
is proper.
Thus, the maximum \emph{a posteriori} (MAP) estimate of ${\bf b}$ is based
on the following optimization problem:
\[
\min_{{\bf b}} \; \Big\{ \frac{1}{2} \|{\bf y}- {\bf X}{\bf b}\|_2^2 +
\sigma\sum
_{j=1}^p t_j \Psi(|b_j|/\sigma) \Big\}.
\]
Clearly, the $t_j$'s are local regularization parameters and the $\eta
_j$'s are latent shrinkage parameters. Moreover, it is interesting that
$\{T(t): t\geq0\}$ (or $\eta$) is defined as a subordinator w.r.t.\ $t$.

The full conditional distribution $p(\sigma|{\bf b}, \etab, {\bf y})$
is conjugate w.r.t.\ the prior, which is $\sigma\sim\IGa(\frac
{a_{\sigma}}{2}, \frac{b_{\sigma}}{2})$.
Specifically, it is an inverse Gamma distribution of the form
\[
p(\sigma|{\bf b}, \etab, {\bf y})\varpropto\frac{1}{\sigma^{\frac
{n{+}2p+a_{\sigma}}{2} +1}} \exp\Big[ {-} \frac{b_{\sigma} + \|
{\bf y}
-{\bf X}{\bf b}\|_2^2 + 2 \sum_{j=1}^p \eta_j |b_j| }{2 \sigma} \Big].
\]
In the following experiment, we use an improper prior of the form
$p(\sigma) \varpropto\frac{1}{\sigma}$ (i.e., $a_{\sigma
}=b_{\sigma}=0$).
Clearly, $p(\sigma|{\bf b}, \etab, {\bf y})$ is still an inverse Gamma
distribution in this setting.
Additionally, based on
\[
p({\bf b}| \etab, \sigma, {\bf y}) \varpropto\exp\Big[-\frac{1}{2
\sigma
}\|{\bf y}-{\bf X}{\bf b}\|_{2}^2 \Big] \prod_{j=1}^p \exp(-\frac
{\eta
_j}{\sigma} |b_j|)\vadjust{\eject}
\]
and the proof of Theorem~\ref{thm:poster} (see Appendix~6), we have that
the conditional distribution $p({\bf b}| \etab, \sigma, {\bf y})$
is proper. However,
the absolute terms $|b_j|$ make the form of $p({\bf b}| \etab, \sigma
, {\bf y}
)$ unfamiliar. Thus, a Gibbs sampling algorithm
is not readily available and we resort to an EM algorithm to estimate
the model.

\subsection{The ECME Estimation Algorithm}

Notice that if $p^{*}(b_j|{t_j}, \sigma)$ is proper, the corresponding
normalizing constant is given by
\[
2 \int_{0}^{\infty} \sigma^{-1} \exp\Big[-t_j \Psi\Big(\frac
{|b_j|} {\sigma} \Big)\Big] d |b_j|= 2 \int_{0}^{\infty} \exp\Big
[-t_j \Psi\Big(\frac{|b_j|} {\sigma} \Big) \Big] d (|b_j|/\sigma),
\]
which is independent of $\sigma$. Also, the conditional distribution
$p(\eta_j|b_j, t_j, \sigma)$ is independent of the normalizing term.
Specifically,
we always have that
\[
p(\eta_j|b_j, t_j, \sigma) = \frac{\exp\big(-\frac{\eta
_j}{\sigma} |b_j| \big)f_{T(t_j)}(\eta_j)} {\exp(-t_j \Psi(|b_j|
/\sigma))},
\]
which is proper.

As shown in Table~\ref{tab:exam}, except for LOG with $t>\xi$ which
can be transformed into a proper prior, the remaining Bernstein
functions cannot be transformed into proper priors.
In any case, our posterior computation is directly based on the
marginal pseudo-prior $p^{*}({\bf b}|{\bf t}, \sigma)$.
We ignore the involved normalizing term, because it is infinite if
$p^{*}({\bf b}|{\bf t}, \sigma)$ is improper and
it is independent of $\sigma$ if $p^{*}({\bf b}|{\bf t}, \sigma)$ is proper.

Given the $k$th estimates $({\bf b}^{(k)}, \sigma^{(k)})$ of $({\bf
b}, \sigma
)$ in the E-step of the EM algorithm, we compute
\begin{align*}
Q({\bf b}, \sigma|{\bf b}^{(k)}, \sigma^{(k)})
& \triangleq\log p({\bf y}|{\bf b}, \sigma) + \sum_{j=1}^p
\int{\log p[b_j | \eta_j, \sigma] p(\eta_j|b_j^{(k)}, \sigma
^{(k)}, t_j)} d \eta_j + \log p(\sigma) \\
& \propto-\frac{n+\alpha_{\sigma}}{2} \log\sigma{-} \frac{\|{\bf y}
{-}{\bf X}{\bf b}\|_2^2 + \beta_{\sigma}}{2 \sigma} - (p+1) \log
\sigma
\\
& \quad- \frac{1}{ \sigma} \sum_{j=1}^p
|b_j| \int\eta_j p(\eta_j|b_j^{(k)}, \sigma^{(k)}, t_j) d \eta_j.
\end{align*}
Here we omit some terms that are independent of
parameters $\sigma$ and ${\bf b}$. In fact, we only need to compute
$\EB
(\eta_j|b_j^{(k)}, \sigma^{(k)})$ in the E-step.
Considering that
\[
\int_{0}^{\infty}{\exp\big(-\frac{\eta_j}{\sigma} |b_j| \big)
f_{T(t_j)}(\eta_j) d \eta_j = \exp(-t_j \Psi(|b_j|/\sigma))},
\]
and taking the derivative w.r.t.\ $|b_j|$ on both sides of the above equation,
we have that
\[
w_j^{(k{+}1)} \triangleq\EB(\eta_j|b_j^{(k)}, \sigma^{(k)}, t_j) =
t_j \Psi'(|b_j^{(k)}|/\sigma^{(k)}).
\]

The M-step maximizes $Q({\bf b}, \sigma|{\bf b}^{(k)}, \sigma^{(k)})$
w.r.t.\
$({\bf b}, \sigma)$.
In particular, it is obtained that:
\begin{align*}
{\bf b}^{(k{+}1)} & = \argmin_{{\bf b}} \; \frac{1}{2} \| {\bf
y}{-}{\bf X}
{\bf b}\|_2^2
+ \sum_{j=1}^p w_j^{(k{+}1)} |b_j|, \\
\sigma^{(k{+}1)} & = \frac{1}{n {+} \alpha_{\sigma} {+} 2 p {+}2}
\Big\{ \|{\bf y}{-}{\bf X}{\bf b}^{(k{+}1)}\|_2^2 + \beta_{\sigma} +
2 \sum
_{j=1}^p w_j^{(k{+}1)} |b_j^{(k{+}1)}| \Big\}.
\end{align*}

The above EM algorithm is related to the linear local approximation
(LLA) procedure~\citep{ZouLi:2008}. Moreover, it shares the same
convergence property given in \citet{ZouLi:2008}
and \citet{ZhangEPGIG:2012}.

Subordinators help us to establish a direct connection between the
local regularization parameters $t_j$'s
and the latent shrinkage parameters $\eta_j$'s (or $T(t_j)$). However,
when we implement the MAP estimation, it is challenging how to select
these local regularization parameters.
We employ
an ECME (for ``Expectation/Conditional Maximization Either")
algorithm~\citep{LiuBio:1994,PolsonBS:2010} for learning about the
$b_j$'s and $t_j$'s simultaneously.
For this purpose, we suggest assigning $t_j$ Gamma prior $\Ga(\alpha
_{t}, 1/\beta_{t})$, namely,
\[
p(t_j) {=} \frac{\beta_{t}^{\alpha_{t}}}{\Gamma(\alpha_{t})}
{t_j^{\alpha_{t}-1}} \exp(-\beta_{t} t_j),
\]
because the full conditional distribution is also Gamma and given by
\[
[t_j|b_j, \sigma] \sim\Ga\big(\alpha_{t}, 1/[\Psi(|b_j|/\sigma)
+ \beta_{t}]\big).
\]
Recall that we here compute the full conditional distribution directly
using the marginal pseudo-prior $p^{*}(b_j|{t_j}, \sigma)$, because
our used Bernstein functions
in Table~\ref{tab:exam} cannot induce proper priors.
However, if $p^{*}(b_j|{t_j}, \sigma)$ is proper, the corresponding
normalizing term would rely on $t_j$. As a result, the full conditional
distribution of $t_j$ is possibly no longer Gamma or even not
analytically available.

Figure~\ref{fig:graphal0}-(a) depicts the hierarchical model for the
Bayesian penalized linear regression,
and Table~\ref{tab:alg} gives the ECME procedure where the E-step and
CM-step are respectively identical to the E-step and the M-step of the
EM algorithm,
with $t_j=t_j^{(k)}$. The CME-step updates the $t_j$'s with
\[
t_j^{(k{+}1)} = \frac{\alpha_{t} -1}{\Psi(|b^{(k)}_j|/\sigma^{(k)})
+ \beta_{t}}, \; j=1, \ldots, p.
\]
In order to make sure that $t_j>0$, it is necessary to assume that
$\alpha_{t}>1$. In the following experiments, we set $\alpha_{t}=10$.

We conduct experiments with the prior
$p({\bf b}_j) \varpropto t_j \sigma^{-1/2} \exp(-t_j (|b_j|/\sigma
)^{1/2})$ for comparison.
This prior is induced from the $\ell_{1/2}$-norm penalty, so
it is a proper specification. Moreover,
the full conditional distribution of $t_j$ w.r.t.\vadjust{\goodbreak}  its Gamma prior
$\Ga({\alpha_t}, 1/{\beta_t})$ is still Gamma;
that is,
\[
[t_j|b_j, \sigma] \sim\Ga\Big({\alpha_t}{+}2, \; 1/({\beta_t} {+}
\sqrt{|b_j|/\sigma})\Big).
\]
Thus, the CME-step for updating the $t_j$'s is given by
\begin{equation} \label{eqn:tthalf}
t_j^{(k{+}1)} = \frac{{\alpha_t} +1}{\sqrt{|b^{(k)}_j|/\sigma
^{(k)}} + {\beta_t}}, \; j=1, \ldots, p.
\end{equation}

The convergence analysis of the ECME algorithm was presented by \citet
{LiuBio:1994}, who proved that the ECME algorithm retains the
monotonicity property from the standard EM. Moreover, the ECME
algorithm based on pseudo-priors was
also used by \citet{PolsonScottSVM:2011}.

\begin{table}[!ht]
\begin{center}
\caption{ The Basic Procedure of the ECME Algorithm}
\label{tab:alg}
\begin{tabular}{|ll|}
\hline
{\bf E-step} & Identical to the E-step of the EM with $t_j=t_j^{(k)}$.
\\
{\bf CM-step} & Identical to the M-step of the EM with $t_j=t_j^{(k)}$.
\\
{\bf CME-step} & Set
$t_j^{(k{+}1)} = \frac{\alpha_{t} -1}{\Psi(|b^{(k)}_j|/\sigma
^{(k)}) + \beta_{t}}$. \\
\hline
\end{tabular}
\end{center}
\end{table}

As we have seen, $p({\bf t}|{\bf y}, {\bf b}, \sigma) = p({\bf t}|{\bf b},
\sigma
) = \prod_{j=1}^p p(t_j|b_j, \sigma)$
and $p(\sigma|{\bf b}, \etab, {\bf t}, {\bf y})$ are proper.
In the following theorem, we show that $p({\bf b}| \sigma, {\bf t},
{\bf y})$
and $p({\bf b}, \sigma, {\bf t}| {\bf y})$
are also proper. Moreover, when the improper prior $p(\sigma)
\varpropto\frac{1}{\sigma}$ (i.e., $\alpha_{\sigma}=\beta_{\sigma
}=0$ in the inverse Gamma prior) is used, Theorem~\ref{thm:poster}
shows that $p({\bf b}, \sigma, {\bf t}| {\bf y})$ is proper under certain
conditions.

\begin{theorem} \label{thm:poster} With the previous prior
specifications for ${\bf b}$, $\sigma$ and ${\bf t}$, we have that
$p({\bf b}|
\sigma, {\bf t}, {\bf y})$, $p({\bf b}, \sigma|{\bf t}, {\bf y})$ and
$p({\bf b},
\sigma, {\bf t}| {\bf y})$ are
proper. Suppose we use the improper prior $p(\sigma) \varpropto\frac
{1}{\sigma}$ for $\sigma$.
If ${\bf y}\notin\mathrm{range}({\bf X})$ \emph{(}the subspace
spanned by
the columns of ${\bf X}$\emph{)}, $p({\bf b}, \sigma|{\bf t}, {\bf
y})$ and
$p({\bf b}
, \sigma, {\bf t}| {\bf y})$ are proper.
\end{theorem}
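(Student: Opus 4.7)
The plan is to establish propriety of the full joint posterior $p({\bf b},\sigma,{\bf t}\mid{\bf y})$, from which propriety of the two simpler conditionals $p({\bf b}\mid\sigma,{\bf t},{\bf y})$ and $p({\bf b},\sigma\mid{\bf t},{\bf y})$ follows by Fubini--Tonelli applied to the non-negative unnormalized joint. The central quantity is the normalizing constant
$Z({\bf y})=\int p({\bf y}\mid{\bf b},\sigma)\,p^{*}({\bf b}\mid{\bf t},\sigma)\,p(\sigma)\prod_{j=1}^{p}p(t_j)\,d{\bf b}\,d\sigma\,d{\bf t}$,
and the goal is to show $Z({\bf y})<\infty$.

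The key majorization comes from $\Psi\ge 0$ on $[0,\infty)$, which gives $\exp(-t_j\Psi(|b_j|/\sigma))\le 1$ and hence $p^{*}({\bf b}\mid{\bf t},\sigma)\le \sigma^{-p}$. Since each $p(t_j)=\Ga(\alpha_{t},1/\beta_{t})$ is proper, integrating ${\bf t}$ out contributes only a factor of one, so the problem reduces to bounding $\int (2\pi\sigma)^{-n/2}\exp(-\|{\bf y}-{\bf X}{\bf b}\|_2^2/(2\sigma))\,\sigma^{-p}p(\sigma)\,d{\bf b}\,d\sigma$. I would then perform the ${\bf b}$-integration: the Gaussian integral yields a constant times $\sigma^{(p-n)/2}$ when ${\bf X}$ has full column rank; otherwise one falls back to the latent-$\etab$ Laplace representation $p({\bf b}\mid\etab,\sigma)\propto\prod_j\exp(-\eta_j|b_j|/\sigma)$ inserted between the likelihood and the subordinator density $f_{T(t_j)}$, so that the conditional on $\etab$ is a proper Laplace density and the marginal against the Gaussian is bounded by $(2\pi\sigma)^{-n/2}$. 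The residual $\sigma$-integral $\int\sigma^{-(n+p)/2}p(\sigma)\,d\sigma$ is finite when $p(\sigma)=\IGa(\alpha_{\sigma}/2,\beta_{\sigma}/2)$, because the integrand takes the form of an unnormalized inverse-Gamma with shape $(n+p+\alpha_{\sigma})/2$ and scale $\beta_{\sigma}/2>0$.

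For the improper prior case $p(\sigma)\propto 1/\sigma$, the critical issue is integrability as $\sigma\to 0^{+}$. The hypothesis ${\bf y}\notin\mathrm{range}({\bf X})$ is used precisely here: it yields $D^{2}\triangleq\min_{{\bf b}}\|{\bf y}-{\bf X}{\bf b}\|_2^2>0$, so the Gaussian likelihood admits the uniform upper bound $p({\bf y}\mid{\bf b},\sigma)\le(2\pi\sigma)^{-n/2}\exp(-D^{2}/(2\sigma))$. This supplies exponential decay as $\sigma\to 0^{+}$ which, combined with the polynomial factor $\sigma^{-n/2}$ as $\sigma\to\infty$, renders $\int_{0}^{\infty}\sigma^{-n/2-1}\exp(-D^{2}/(2\sigma))\,d\sigma<\infty$. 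Plugging this back into the $({\bf b},{\bf t})$-level bounds above again gives $Z({\bf y})<\infty$.

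The main obstacle is the potentially improper character of the pseudo-prior $p^{*}({\bf b}\mid{\bf t},\sigma)$: for bounded Bernstein functions such as EXP and CEL in Table~\ref{tab:exam}, $\exp(-t_j\Psi(|b_j|/\sigma))$ does not vanish as $|b_j|\to\infty$, so the ${\bf b}$-integral cannot draw any decay from the pseudo-prior and must rely on the Gaussian likelihood in the null-space directions of ${\bf X}$. This is precisely why the $\etab$-hierarchical representation is the cleanest route: $p({\bf b}\mid\etab,\sigma)$ is a proper Laplace density and $f_{T(t_j)}$ is a proper subordinator density, so every marginalization manipulates proper densities and the finiteness of $Z({\bf y})$ reduces to bounding $\EB[p({\bf y}\mid\etab,\sigma)]\le(2\pi\sigma)^{-n/2}$ against the tractable proper priors on $(\etab,{\bf t},\sigma)$.
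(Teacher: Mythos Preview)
Your overall strategy --- bound $\exp(-t_j\Psi(\cdot))\le 1$ via $\Psi\ge 0$, integrate the Gaussian likelihood over ${\bf b}$, then close with an inverse-Gamma integral in $\sigma$ --- is exactly the paper's. Your observation that ${\bf y}\notin\mathrm{range}({\bf X})$ forces $D^2=\min_{{\bf b}}\|{\bf y}-{\bf X}{\bf b}\|_2^2>0$ is precisely the paper's $\nu={\bf y}^T({\bf I}_n-{\bf X}({\bf X}^T{\bf X})^{+}{\bf X}^T){\bf y}$, used identically to supply a positive inverse-Gamma scale when $\beta_\sigma=0$. The paper proves the three propriety statements in the order $p({\bf b}\mid\sigma,{\bf t},{\bf y})$, $p({\bf b},\sigma\mid{\bf t},{\bf y})$, $p({\bf b},\sigma,{\bf t}\mid{\bf y})$, whereas you go joint-first and deduce the marginals by Tonelli; either ordering is fine.

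The $\etab$-fallback, however, does not work as written. The pseudo-prior is
\[
p^{*}(b_j\mid t_j,\sigma)=\sigma^{-1}\!\int_0^\infty\!\exp(-\eta_j|b_j|/\sigma)\,f_{T(t_j)}(\eta_j)\,d\eta_j
=\int_0^\infty \frac{2}{\eta_j}\,L\big(b_j\,\big|\,0,\sigma/(2\eta_j)\big)\,f_{T(t_j)}(\eta_j)\,d\eta_j,
\]
so the implied mixing weight on $\eta_j$ is $2\eta_j^{-1}f_{T(t_j)}(\eta_j)$, not the proper subordinator density $f_{T(t_j)}(\eta_j)$. This is exactly the $f_{T^{*}(t_j)}$ that the paper warns may be improper. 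After you bound the likelihood by $(2\pi\sigma)^{-n/2}$ and integrate out ${\bf b}$ against the proper Laplace, you are left with $\prod_j\int_0^\infty\eta_j^{-1}f_{T(t_j)}(\eta_j)\,d\eta_j$, which is infinite for the compound-Poisson subordinators of Table~\ref{tab:exam} (each carries a positive point mass at $\eta_j=0$), so the bound is vacuous. The paper avoids the augmentation entirely: it keeps the crude majorization $\exp(-t_j\Psi)\le 1$ and handles $\int\exp(-\|{\bf y}-{\bf X}{\bf b}\|_2^2/(2\sigma))\,d{\bf b}$ directly via the Moore--Penrose decomposition $\|{\bf y}-{\bf X}{\bf b}\|_2^2=({\bf b}-{\bf z})^T{\bf X}^T{\bf X}({\bf b}-{\bf z})+\nu$ with ${\bf z}=({\bf X}^T{\bf X})^{+}{\bf X}^T{\bf y}$, identifying the ${\bf b}$-part with a (possibly singular) multivariate normal $N({\bf b}\mid{\bf z},\sigma({\bf X}^T{\bf X})^{+})$ and feeding the residual $\nu$ into the inverse-Gamma scale for $\sigma$. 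That is the route to follow for the rank-deficient case rather than the $\etab$ augmentation.
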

The proof of Theorem~\ref{thm:poster} is given in Appendix~6. Notice that
the proof only requires that $\Psi(s)\geq0$, and does not involve the
other properties of the Bernstein function. In other words,
Theorem~\ref{thm:poster} is still held for any nonnegative but not
necessarily Bernstein function $\Psi$.
Theorem~\ref{thm:poster} shows that our ECME algorithm is to find the
MAP estimates of the parameters ${\bf b}$ and $\sigma$
as well as the MAP estimates of the local regularization parameters ${t_j}$'s.

In the EM algorithm of \citet{PolsonScott:2011}, the authors set
$t_1=\cdots=t_p\triangleq\nu$ as a global regularization parameter
and assumed it to be prespecified \citep[see Section 5.3 of
][]{PolsonScott:2011}.
This in fact leads to a parametric setting for the latent shrinkage
parameters $\eta$~\citep
{ZouLi:2008,CevherNIPS:2009,GarriguesfNIPS:2010,LeeCaronDoucetHolmes:2010,ArmaganDunsonLee}.
However, \citet{PolsonScott:2011} aimed to
construct sparse priors using increments of subordinators.
It is worth noting that\vadjust{\eject} \citet{CaronDoucet:icml}
regarded their model
as a nonparametric
model w.r.t.\ the regression coefficients $b$; that is, they treated
$b$ as a stochastic process of $T$. Thus,
the treatment of \citet{CaronDoucet:icml} is also different from ours.

\begin{figure*}[!ht]
\includegraphics{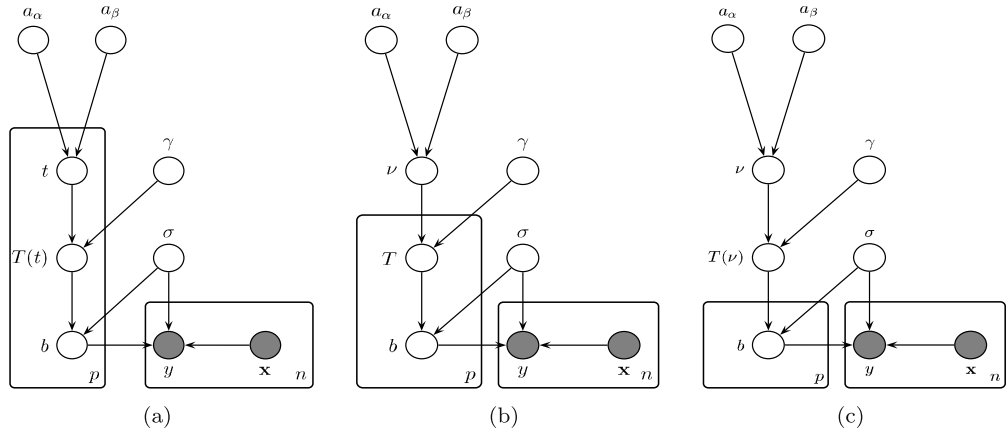}
\caption{Graphical representations for hierarchical regression models.
(a) Nonparametric setting for $T$, i.e.,
different $T$ have different $t$; (b)
Parametric setting for $T$, i.e., different $T$ share a common $\nu$;
(c) nonseparable setting, i.e., one $T$.}
\label{fig:graphal0}
\end{figure*}

\section{Experimental Analysis} \label{sec:experiment}

We now conduct empirical analysis with our ECME procedure described in
Algorithm~1 based on Figure~\ref{fig:graphal0}-(a).
We also implement the setting in \citet{PolsonScott:2011}, i.e.,
$t_1=\cdots=t_p\triangleq\nu$ and the $T_j$'s are independent given
$\nu$.
\citet{PolsonScott:2011} suggested that $\nu$ is prespecified as the
global regularization parameter.
In fact, we can also estimate $\nu$ under the ECME algorithm.
This setting is described in Figure~\ref{fig:graphal0}-(b) and the
corresponding ECME algorithm is given in Algorithm~2.

Notice that in the setting $t_1=\cdots=t_p\triangleq\nu$,
if the latent shrinkage $T(t)$ is treated as a stochastic process of
$t$, then the $b_j$'s share a common
$T(\nu)$. In this case, the marginal pseudo-prior for ${\bf b}$ is
nonseparable; that is, $p({\bf b})\propto\exp(-\frac{\nu}{\xi}
\Psi(\|
{\bf b}\|_1/\sigma))$.
Figure~\ref{fig:graphal0}-(c) illustrates the resulting model and the
corresponding ECME algorithm (see Algorithm~3) is also performed.

\begin{table}[!ht]
\begin{center}
\begin{small}
\begin{tabular}{|ll|}
\hline
\multicolumn{2}{|c|}{\bf Algorithm~1: ECME for Bayesian Regression
with Penalty $\Psi_{\rho}(|b|)$ or $\Phi_{\rho}(|b|)$} \\
& \\
{\bf E-step} & Given the current estimates ${\bf b}^{(k)}$ and
$t_j=t_j^{(k)}$, compute \\
& $w_j^{(k)} = {t_j^{(k)}} \Psi'_{\rho}( |b_j^{(k)}|/\sigma^{(k)}) $
\; or \; $w_j^{(k)} =
{t_j^{(k)}} \Phi'_{\rho}( |b_j^{(k)}|/\sigma^{(k)}) $, $j=1, \ldots
, p$ \\
{\bf CM-step} & Solve the following problem: \\
&
${\bf b}^{(k{+}1)} = \argmin_{{\bf b}} \; \frac{1}{2} \| {\bf
y}{-}{\bf X}
{\bf b}\|_2^2 +
\sum_{j=1}^p w_j^{(k{+}1)} |b_j|$, \\
& $\sigma^{(k{+}1)} = \frac{1}{\alpha_{\sigma} {+}n {+} 2 p {+}2}
\Big\{\beta_{\sigma} + \|{\bf y}{-}{\bf X}{\bf b}^{(k{+}1)}\|_2^2 +
2 \sum
_{j=1}^p w_j^{(k{+}1)} |b_j^{(k{+}1)}| \Big\}.$ \\
{\bf CME-step} & Compute \\
& $t_j^{(k{+}1)} = \frac{{\alpha_t} -1}{{\beta_t} + \Psi_{\rho}(
|b_j^{(k)}|/\sigma^{(k)}) }$
\; or \;
$t_j^{(k{+}1)} = \frac{{\alpha_t} -1}{{\beta_t} + \Phi_{\rho}(
|b_j^{(k)}|/\sigma^{(k)}) }$. \\
\hline
\end{tabular}
\end{small}
\end{center}
\end{table}

\begin{table}[!ht]
\begin{center}
\begin{small}
\begin{tabular}{|ll|}
\hline
\multicolumn{2}{|c|}{\bf Algorithm~2: ECME for Bayesian Regression
with Penalty $\Psi_{\rho}(|b|)$ or $\Phi_{\rho}(|b|)$} \\
& \\
{\bf E-step} & Given the current estimates ${\bf b}^{(k)}$ and $\nu
=\nu
^{(k)}$, compute \\
& $w_j^{(k)} = { \nu^{(k)}} {\Psi'_{\rho}( |b_j^{(k)}|/\sigma
^{(k)}) }$ \; or \; $w_j^{(k)} =
\nu^{(k)} { \Phi'_{\rho}( |b_j^{(k)}|/\sigma^{(k)}) }$, $j=1,
\ldots, p$ \\
{\bf CM-step} & Solve the following problem: \\
& ${\bf b}^{(k{+}1)} = \argmin_{{\bf b}} \; \frac{1}{2} \| {\bf
y}{-}{\bf X}
{\bf b}\|_2^2
+ \sum_{j=1}^p w_j^{(k{+}1)} |b_j|$, \\
& $\sigma^{(k{+}1)} = \frac{1}{\alpha_{\sigma} {+}n {+} 2 p {+}2}
\Big\{\beta_{\sigma} + \|{\bf y}{-}{\bf X}{\bf b}^{(k{+}1)}\|_2^2 +
2 \sum
_{j=1}^p w_j^{(k{+}1)} |b_j^{(k{+}1)}| \Big\}.$ \\
{\bf CME-step} & Compute \\
& $\nu^{(k{+}1)} = \frac{{\alpha_t} -1}{{\beta_t} + \sum_{j=1}^p
{\Psi_{\rho}( |b_j^{(k)}|/\sigma^{(k)}) }}$
\; or \;
$\nu^{(k{+}1)} = \frac{{\alpha_t} -1}{{\beta_t} + \sum_{j=1}^p {
\Phi_{\rho}( |b_j^{(k)}|/\sigma^{(k)}) }}$. \\
\hline
\end{tabular}
\end{small}
\end{center}
\end{table}

\begin{table}[!ht] 
\begin{center}
\begin{small}
\begin{tabular}{|ll|}
\hline
\multicolumn{2}{|c@{\ }|}{\bf Algorithm~3: ECME for Bayesian Regression
with Penalty $\Psi_{\rho}(\|{\bf b}\|_1)$ or $\Phi_{\rho}(\|{\bf
b}\|_1)$}
\\
& \\
{\bf E-step} & Given the current estimates ${\bf b}^{(k)}$ and $\nu
=\nu
^{(k)}$, compute \\
& $w^{(k)} = { \nu^{(k)}} {\Psi'(\|{\bf b}^{(k)}\|_1/\sigma^{(k)})}
$ \;
or \; $w^{(k)} =
\nu^{(k)} {\Phi'(\|{\bf b}^{(k)}\|_1/\sigma^{(k)})} $ \\
{\bf CM-step} & Solve the following problem: \\
& ${\bf b}^{(k{+}1)} = \argmin_{{\bf b}} \; \frac{1}{2} \| {\bf
y}{-}{\bf X}
{\bf b}\|_2^2
+ w^{(k{+}1)} \|{\bf b}\|_1$, \\
& $\sigma^{(k{+}1)} = \frac{1}{\alpha_{\sigma} {+} n {+} 2p{+}2}
\Big\{ \beta_{\sigma} + \|{\bf y}{-}{\bf X}{\bf b}^{(k{+}1)}\|_2^2 + 2
w^{(k{+}1)} \|{\bf b}^{(k{+}1)}\|_1 \Big\}.$ \\
{\bf CME-step} & Compute \\
& $\nu^{(k+1)} = \frac{{\alpha_t} -1}{{\beta_t} + {\Psi(\|{\bf b}
^{(k)}\|_1/\sigma^{(k)})}} $
\; or \;
$\nu^{(k{+}1)} = \frac{{\alpha_t} -1}{{\beta_t} + {\Psi(\|{\bf b}
^{(k)}\|_1/\sigma^{(k)})} }$. \\
\hline
\end{tabular}
\end{small}
\end{center}
\end{table}

We refer to the algorithms based on Figures~\ref{fig:graphal0}-(a),
(b) and (c) as ``Alg 1," ``Alg 2" and ``Alg 3,"
respectively.
We consider the nonconvex $\ell_{1/2}$, LOG, EXP, LFR and CEL
penalties to respectively implement these three algorithms.
The CME-step with the $\ell_{1/2}$-norm is based on expression (\ref
{eqn:tthalf}).
According to Theorem~\ref{thm:limit},
we can set, for instance, $\xi=\frac{\gamma}{1+\gamma}$ in LFR.
However, Theorem~\ref{thm:limit} also shows that when $\xi=\gamma$,
the two settings have the same asymptotic properties
as $\gamma\to0$. That is, the resulting model approaches the lasso.
We thus set $\xi=\gamma$ in ``Alg 1," and
$\xi=p \gamma$ in both ``Alg 2" and ``Alg 3." The settings are
empirically validated to be effective.
As we have mentioned, $\gamma$ is a global shrinkage parameter, so we
call it the global tuning parameter. In the experiments,
$\gamma$ and $\beta_t$ are selected via cross validation. As
hyperparameters $\alpha_{\sigma}$, $\beta_{\sigma}$, and ${\alpha
_t}$, we simply set $\alpha_{\sigma}=\beta_{\sigma}=0$, ${\alpha_t}=10$.

\begin{table*}[!ht]\setlength{\tabcolsep}{1.7pt}
\vskip-0.05in
\caption{Results of the three algorithms with $\ell_{1/2}$, LOG, EXP,
LFR and CEL on the simulated data sets. Here a standardized prediction
error (SPE) is used to evaluate the model prediction ability, and the
minimal achievable value for SPE is $1$.
And ``\checkmark'' denotes the proportion of correctly predicted zero
entries in ${\bf b}$, that is,
$\frac{\#\{i|b_i = 0 \, \textrm{and}\, \hat{b}_i = 0\}}{\#\{i|b_i =
0 \}}$;
if all the nonzero entries are correctly predicted, this score should
be $100\%$.
\label{tab:toy2}}\vspace*{6pt}
\begin{center}
\begin{tabular}{l | c c | c c | c c }
\hline
& SPE($\pm$STD) & \checkmark(\%) & SPE($\pm$STD) & \checkmark(\%) &
SPE($\pm$STD) & \checkmark(\%) \\
\hline\hline
& \multicolumn{2}{c|}{Data S}
& \multicolumn{2}{c|}{Data M}
& \multicolumn{2}{c}{Data L}\\ \hline
Alg 1+LOG & 1.0914($\pm$0.1703) & 98.24 & 1.1526($\pm$0.1025) & 97.42
& 1.4637($\pm$0.1735) & 90.04 \\
Alg 2+LOG & 1.1508($\pm$0.1576) & 85.25 & 1.3035($\pm$0.1821) & 87.35
& 1.5084($\pm$0.1676) & 88.67 \\
Alg 3+LOG & 1.1268($\pm$0.1754) & 86.33 & 1.5524($\pm$0.1437) & 91.21
& 1.5273($\pm$0.1567) & 85.25 \\
\hline
Alg 1+EXP & 1.1106($\pm$0.1287) & 98.67 & 1.1587($\pm$0.1527) & 97.98
& 1.4608($\pm$0.1557) & 87.55 \\
Alg 2+EXP & 1.1654($\pm$0.1845) & 87.36 & 1.3134($\pm$0.1152) & 88.45
& 1.5586($\pm$0.1802) & 85.34 \\
Alg 3+EXP & 1.1552($\pm$0.1495) & 80.33 & 1.5047($\pm$0.1376) & 93.67
& 1.5145($\pm$0.1594) & 84.56 \\
\hline
Alg 1+LFR & 1.0985($\pm$0.1824) & 98.67 & 1.1603($\pm$0.1158) & 98.34
& 1.4536($\pm$0.1697) & 89.23 \\
Alg 2+LFR & 1.1326($\pm$0.1276) & 86.35 & 1.3089($\pm$0.1367) & 87.28
& 1.5183($\pm$0.1507) & 85.67 \\
Alg 3+LFR & 1.1723($\pm$0.1534) & 84.28 & 1.3972($\pm$0.2356) & 88.33
& 1.5962($\pm$0.1467) & 86.53 \\
\hline
Alg 1+CEL & 1.1238($\pm$0.1145) & 96.12 & 1.1642($\pm$0.1236) & 98.26
& 1.4633($\pm$0.1346) & 89.58 \\
Alg 2+CEL & 1.1784($\pm$0.1093) & 84.67 & 1.4059($\pm$0.1736) & 89.67
& 1.5903($\pm$0.1785) & 85.23 \\
Alg 3+CEL & 1.1325($\pm$0.1282) & 85.23 & 1.3762($\pm$0.1475) & 90.32
& 1.5751($\pm$0.1538) & 82.65 \\
\hline
Alg 1+$\ell_{{1}/{2}}$ & 1.2436($\pm$0.1458) & 89.55 & 1.2937($\pm
$0.2033) & 94.83 & 1.5032($\pm$0.1633) & 85.86 \\
Alg 2+$\ell_{{1}/{2}}$ & 1.2591($\pm$0.1961) & 79.88 & 1.5902($\pm
$0.2207) & 83.50 & 1.6859($\pm$0.1824) & 83.58 \\
Alg 3+$\ell_{{1}/{2}}$ & 1.2395($\pm$0.2045) & 75.34 & 1.5630($\pm
$0.1642) & 80.83 & 1.6732($\pm$0.1711) & 80.67 \\
\hline
Lasso & 1.3454($\pm$0.3098) & 60.17 & 1.6708($\pm$0.2149) & 66.08 &
1.6839($\pm$0.1825) & 71.33 \\
\hline\hline
\end{tabular}
\end{center}
\end{table*}

Our analysis is based on a set of
simulated data, which are generated according to \citet
{MazumderSparsenet:11}. In particular,
we consider the following three data models --- ``small," ``medium" and
``large."
\begin{description}
\item[Data {S}:] $n = 35$, $p = 30$, ${\bf b}^{S} = (0.03, 0.07, 0.1, 0.9,
0.93, 0.97, {\bf0})^T$,
and $\Si^{S}$ is a $p\times p$ matrix with $1$ on the diagonal and
$0.4$ on the off-diagonal.\vadjust{\eject}
\item[Data {M}:] $n = 100$, $p = 200$,
${\bf b}^{M}$ has $10$ non-zeros such that $b^{M}_{20i+1}=1$ and $i=0, 1,
\cdots, 9$,
and $\Si^{M} = \{0.7^{|i-j|}\}_{1\leq i,j \leq p}$.
\item[Data {L}:] $n = 500$, $p = 1000$,
${\bf b}^{L} = ({\bf b}^{M}, \cdots, {\bf b}^{M})$,
and $\Si^{L} = \mathrm{diag}(\Si^{M}, \cdots, \Si^{M})$ (five blocks).
\end{description}
For each data model, we generate $n{\times} p$ data matrices ${\bf X}$
such that
each row of ${\bf X}$ is generated from a multivariate Gaussian
distribution with mean ${\bf0}_p$ and
covariance matrix $\Si^{S}$, $\Si^{M}$, or $\Si^{L}$.

We assume a linear model ${\bf y}= {\bf X}{\bf b}+ \epsi$ with multivariate
Gaussian predictors ${\bf X}$ and Gaussian errors.
We choose $\sigma$ such that the Signal-to-Noise Ratio (SNR) is a
specified value. Following the setting in \citet
{MazumderSparsenet:11}, we use $\mathrm{SNR} = 3.0$ in all the experiments.
We employ a standardized prediction error (SPE) to evaluate the model
prediction ability. The minimal achievable value for SPE is $1$.
Variable selection accuracy is measured by the correctly predicted
zeros and incorrectly predicted zeros in $\hat{{\bf b}}$.
The\vadjust{\eject} SNR and SPE are defined as
\[
\textrm{SNR} = \frac{\sqrt{{\bf b}^T \Si{\bf b}}}{\sigma} \quad
\mbox
{and} \quad
\textrm{SPE} = \frac{\EB({\bf y}- {\bf x}\hat{{\bf b}})^2}{\sigma^2}.
\]

For each data model, we generate training data of size $n$,
very large validation data and test data, each of size $10000$.
For each algorithm, the optimal global tuning parameters are chosen by
cross validation based on minimizing the average prediction errors.
With the model $\hat{{\bf b}}$ computed on the training data, we compute
SPE on the test data.
This procedure is repeated $100$ times, and we report the average and
standard deviation of SPE and the average of zero-nonzero error.
We use ``\checkmark'' to denote the proportion of correctly predicted
zero entries in ${\bf b}$, that is,
$\frac{\#\{i|b_i = 0 \, \textrm{and}\, \hat{b}_i = 0\}}{\#\{i|b_i =
0 \}}$;
if all the nonzero entries are correctly predicted, this score should
be $100\%$.

We report the results in Table~\ref{tab:toy2}. It is seen that our
setting in Figure~\ref{fig:graphal0}-(a) is better than the other two
settings in Figures~\ref{fig:graphal0}-(b) and (c) in both model
prediction accuracy and variable selection ability.
Especially, when the size of the dataset takes large values, the
prediction performance of the second setting becomes
worse. The several nonconvex penalties are competitive, but they
outperform the lasso. Moreover, we see that LOG, EXP, LFR and CEL
slightly outperform $\ell_{1/2}$.
The $\ell_{1/2}$ penalty indeed suffers from the problem of numerical
instability during
the EM computations.
As we know, the priors induced from LFR, CEL and EXP as well as LOG
with $t\leq\xi$ are improper,
but the prior induced from $\ell_{1/2}$ is proper.
The experimental results show that these improper priors work well,
even better than the proper case.

\begin{figure} 
\includegraphics{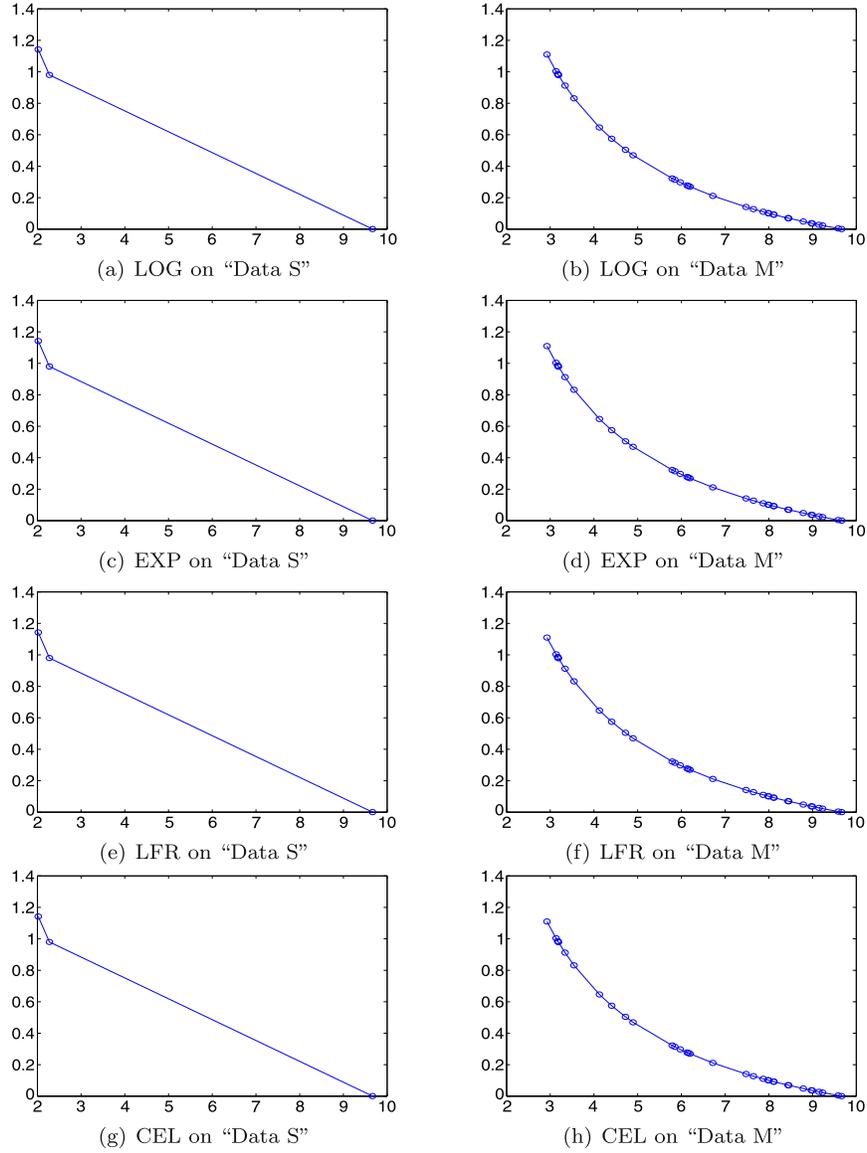}
\caption{The change of $|\hat{b}_{\pi_j}|$ vs. $\hat{t}_{\pi_j}$
on ``Data S" and ``Data M" where
$({\pi_1}, \ldots, {\pi_p})$ is the permutation of $({1}, \ldots,
{p})$ such that $\hat{t}_{\pi_1}\leq\cdots\leq\hat{t}_{\pi_p}$.}
\label{fig:tb1}
\end{figure}


Recall that in our approach each regression variable $b_j$ corresponds
to a distinct local tuning parameter $t_j$. Thus, it is interesting
to empirically investigate the inherent relationship between $b_j$ and
$t_j$. Let $\hat{t}_j$ be the estimate of $t_j$
obtained from our ECME algorithm (``Alg 1"), and $({\pi_1}, \ldots,
{\pi_p})$ be the permutation of
$({1}, \ldots, {p})$ such that $\hat{t}_{\pi_1}\leq\cdots\leq\hat
{t}_{\pi_p}$.
Figure~\ref{fig:tb1} depicts the change of $|\hat{b}_{\pi_j}|$ vs.\
$\hat{t}_{\pi_j}$ with LOG, EXP, LFR and CEL on ``Data {S}" and
``Data {M}."
We see that $|\hat{b}_{\pi_j}|$ is decreasing w.r.t.\ $\hat{t}_{\pi
_j}$. Moreover, $|\hat{b}_{\pi_j}|$ becomes 0 when
$\hat{t}_{\pi_j}$ takes some large value. A similar phenomenon is
also observed for ``Data {L}."
This thus shows that the subordinator is a powerful Bayesian
approach for variable selection.

\section{Conclusion} \label{sec:conclusion}

In this paper we have introduced subordinators into the definition of
nonconvex penalty functions.
This leads us to a Bayesian approach for constructing sparsity-inducing
pseudo-priors.
In particular, we have illustrated the use of two compound Poisson
subordinators: the compound Poisson Gamma subordinator and the negative
binomial subordinator.
In addition, we have established the relationship
between the two families of compound Poisson subordinators. That is,
we have proved that the two families of compound Poisson subordinators
share the same limiting behaviors.
Moreover, their densities at each time have the same mean and variance.

We have developed the
ECME algorithms for solving sparse learning problems based on the
nonconvex LOG, EXP, LFR and CEL penalties.
We have conducted the
experimental comparison with the state-of-the-art approach.
The results have shown that our nonconvex penalization approach is
potentially useful in high-dimensional
Bayesian modeling.
Our approach can be cast into
a point estimation framework. It is also interesting to fit a fully
Bayesian framework based on the MCMC estimation.
We would like to address this issue in future work.

\section*{Appendix 1: The L\'{e}vy Measure of the CEL Function}
\label{app:ly2}

Consider that
\begin{align*}
\log\Big[\frac{1{+}\rho}{\rho} -\frac{1}{\rho} \exp({-}\frac
{\rho}{1{+}\rho} \gamma s)\Big] &=
\log\Big[1-\frac{1}{1{+}\rho} \exp(-\frac{\rho}{1{+}\rho}
\gamma s)\Big] - \log\Big[1-\frac{1}{1{+}\rho}\Big] \\
& = \sum_{k=1}^{\infty} \frac{1}{k (1{+}\rho)^k} \Big[1- \exp\Big
({-}\frac{\rho}{1{+}\rho} k \gamma s\Big)\Big] \\
& = \sum_{k=1}^{\infty} \frac{1}{k (1{+}\rho)^k} \int_{0}^{\infty
}(1- \exp(- u s))
\delta_{\frac{\rho k \gamma}{1{+}\rho}}(u) d u.
\end{align*}
We thus have that $\nu(du) = \frac{1+\rho}{\xi} \sum_{k=1}^{\infty
} \frac{1}{k (1{+}\rho)^k} \delta_{\frac{\rho k \gamma}{1{+}\rho
}}(u) d u$.

\section*{Appendix 2: The Proof of Proposition~\ref{pro:8}}
\label{app:p8}

We here give an alternative proof of Proposition~\ref{pro:8}-(d),
which is immediately obtained from the following lemma.

\begin{lemma} Let $X$ take discrete value on ${\NB\cup\{0\}}$ and
follow negative binomial distribution $\mathrm{Nb}(r, p)$.
If $r$ converges to a positive constant as $p\to0$, $p X$ converges in
distribution to a Gamma random variable with shape $r$ and scale $1$.%
\end{lemma}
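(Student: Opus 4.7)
The natural route is to pass through Laplace transforms (since $X\geq 0$) and invoke L\'evy's continuity theorem. First I would record the probability-generating / Laplace structure of the negative binomial: for $X \sim \mathrm{Nb}(r,p)$ with pmf $\Pr(X=k) = \frac{\Gamma(k{+}r)}{k!\,\Gamma(r)} p^{r}(1{-}p)^{k}$, one obtains, by summing the series,
\[
\EB(e^{-sX}) \;=\; \Bigl(\frac{p}{1-(1-p)e^{-s}}\Bigr)^{r}, \quad s\geq 0.
\]
Rescaling by $p$ gives
\[
\EB(e^{-s\,pX}) \;=\; \Bigl(\frac{p}{1-(1-p)e^{-sp}}\Bigr)^{r}.
\]

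Next I would compute the pointwise limit as $p\to 0$ (with $r$ converging to some positive constant $r_0$). A first-order Taylor expansion gives $e^{-sp} = 1 - sp + O(p^2)$, hence $(1-p)e^{-sp} = 1 - (1+s)p + O(p^2)$, and therefore
\[
\frac{p}{1-(1-p)e^{-sp}} \;=\; \frac{p}{(1+s)p + O(p^2)} \;\longrightarrow\; \frac{1}{1+s}.
\]
Raising to the (convergent) power $r$ and using continuity of $x\mapsto x^{r_0}$ on $(0,\infty)$, the Laplace transform of $pX$ converges to $(1+s)^{-r_0}$ for every $s\geq 0$. This is exactly the Laplace transform of the $\Ga(r_0,1)$ distribution (density $\frac{1}{\Gamma(r_0)}x^{r_0-1}e^{-x}$ on $(0,\infty)$).

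Finally I would invoke the continuity theorem for Laplace transforms of non-negative random variables: pointwise convergence of Laplace transforms on $[0,\infty)$ to the Laplace transform of a probability measure implies convergence in distribution. This delivers $pX \overset{d}{\to} \Ga(r_0,1)$, which is the claim of the lemma. Applying this with $r = (\rho{+}1)t/\xi$ and $p = \rho/(\rho{+}1) \to 0$ as $\rho\to 0^{+}$ then yields the second half of Proposition~\ref{pro:8}-(d), since in that limit $r \to t/\xi$ and the scaling $pX$ matches the scaling $\frac{k\rho\gamma}{\rho{+}1}$ up to the factor $\gamma$ that is absorbed into the scale parameter.

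The only mildly delicate step is justifying that $r$ is allowed to vary with $p$ (rather than being fixed): this is handled by noting the pre-limit identity above holds for every $(r,p)$, and then factoring the limit as $\lim_{p\to 0} \bigl(\cdot\bigr)^{r(p)} = \bigl(\lim (\cdot)\bigr)^{\lim r(p)}$, which is legitimate once we know the base has a finite positive limit and the exponent converges. Nothing else in the argument requires work beyond the Taylor expansion, so I do not anticipate a serious obstacle.
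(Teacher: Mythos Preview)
Your argument is correct and is in fact the clean, textbook route: compute the Laplace transform of $pX$ explicitly from the negative binomial pgf, expand the denominator to first order in $p$, and conclude via the continuity theorem for Laplace transforms. Each step is sound, including the handling of the $p$-dependent exponent $r(p)\to r_0$.

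The paper, however, proceeds quite differently. Rather than transforms, it works directly with the cumulative distribution function: it writes $F_{pX}(x)=\sum_{kp\leq x}\frac{\Gamma(k+r)}{\Gamma(r)\,k!}\,p^{r}(1-p)^{k}$, reinterprets the sum as a Riemann sum in the variable $u=kp$ with mesh $p$, and then evaluates the pointwise limit of the summand using Stirling's approximation to show $\frac{\Gamma(u/p+r)}{\Gamma(u/p+1)}\,p^{r-1}\to u^{r-1}$ together with $(1-p)^{u/p}\to e^{-u}$. This recovers the Gamma CDF $\int_0^x \frac{u^{r-1}}{\Gamma(r)}e^{-u}\,du$ directly. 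Your Laplace-transform proof is shorter and avoids both Stirling's formula and the (somewhat delicate) justification of exchanging the limit with an infinite Riemann sum; the paper's approach, on the other hand, is self-contained and yields the limiting density without appealing to an abstract continuity theorem. Either route is acceptable for the lemma; yours is arguably the more robust of the two.
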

\begin{proof}
Since
\[
F_{p X}(x) = \Pr(p X\leq x) = \sum_{
\begin{array}{l} kp \leq x \\ k=0
\end{array}
}^{\infty} \frac{\Gamma(k+r)}{\Gamma(r) \Gamma(k{+}1)} p^r (1-p)^k,
\]
we have that
\begin{align*}
\lim_{p\to0+} F_{p X}(x) &= \lim_{p\to0+} \sum_{
\begin{array}{l} kp \leq x \\ k=0
\end{array}
}^{\infty} \frac{\Gamma(k+r)}{\Gamma(r) \Gamma(k{+}1)} p^r (1-p)^k
\\
& = \lim_{p\to0+} \sum_{
\begin{array}{l} kp \leq x \\ k=1
\end{array}
}^{\infty} \frac{\Gamma(\frac{k p}{p}+r)}{\Gamma(r) \Gamma(\frac
{k p}{p}{+}1)} p p^{r-1} (1-p)^k\\
&= \frac{1}{\Gamma(r)} \int_{0}^{x}{\lim_{p \to0} \frac{\Gamma
(\frac{u}{p}{+}r) }{\Gamma(\frac{u}{p} {+}1)}} p^{r-1} (1-p)^{u/p} du.
\end{align*}
Notice that $\liml_{p\to0}(1-p)^{u/p}= \exp(-u)$ and
\begin{align*}
\lim_{p\to0} \frac{\Gamma(\frac{u}{p}{+}r) } {\Gamma(\frac{u}{p}
{+}1)} p^{r-1} &= \lim_{p\to0} \frac{\Big(\frac{u}{p}+r\Big
)^{\frac{u}{p}+r-\frac{1}{2}} \exp(-\frac{u}{p}-r) } {\Big(\frac
{u}{p}+1\Big)^{\frac{u}{p}+1-\frac{1}{2}} \exp(-\frac{u}{p}-1) }
p^{r-1} \\
& =\exp(1-r) \lim_{p\to0} \left(\frac{\frac
{u}{p}{+}1{+}r{-}1}{\frac{u}{p}+1} \right)^{\frac{u}{p}+1} \left
(\frac{\frac{u}{p}{+}r}{\frac{u}{p}+1} \right)^{-\frac{1}{2}} \Big
(\frac{u}{p}{+}r \Big)^{r-1} p^{r-1} \\
& = \exp(1-r) \exp(r-1) u^{r-1} = u^{r-1}.
\end{align*}
This leads us to
\[
\lim_{p\to0+} F_{p X}(x) =\int_{0}^{x}{\frac{u^{r-1}}{\Gamma(r)}
\exp(-u) du}.\qedhere
\]
\end{proof}

Similarly, we have that
\begin{align*}
\lim_{\rho\to0} \nu(d u) &= \frac{\rho+1}{\xi} \sum
_{k=1}^{\infty}{ \frac{\rho}{k \rho(1{+}\rho)^{k\rho/\rho} }
\delta_{k \rho\gamma/(1{+}
\rho)} d u } \\
&= \frac{1}{\xi}\int_{0}^{\infty} z^{-1} \exp(-z) \delta_{z
\gamma}(u) d z \\
& = \frac{1}{\xi} u^{-1} \exp(-u/\gamma).
\end{align*}

\section*{Appendix 3: The Proof of Theorem~\ref{thm:poigam}}
\label{app:poigam}

\begin{proof}
Consider a mixture of $\Ga(\eta|k \nu, \beta)$ with $\Po(k|\lambda
)$ mixing. That is,
\begin{align*}
p(\eta) & = \sum_{k=0}^{\infty} \Ga(\eta|k \nu, \beta) \Po
(k|\lambda) \\
& = \sum_{k=0}^{\infty} \frac{\beta^{-k\nu}}{\Gamma(k \nu)} \eta
^{k \nu-1}
\exp(-\frac{\eta}{\beta}) \frac{\lambda^k}{k!} \exp(- \lambda)
\\
& = \lim_{k\to0} \frac{ \eta^{k\nu-1} \lambda^k} { \beta^{k \nu
}\Gamma(k \nu) k!}
\exp(-\frac{\eta}{\beta}) \exp(- \lambda) +
\sum_{k=1}^{\infty} \frac{{\lambda}^{k} (\eta/\beta)^{k\nu}
}{\eta\Gamma(k \nu) k!} \exp(-(\frac{\eta}{\beta} + {\lambda
})) \\
& = \exp(-{\lambda})\Big\{ \delta_{0}(\eta) + \exp(-\frac{\eta
}{\beta} ) \sum_{k=1}^{\infty} \frac{{\lambda}^{k} (\eta/\beta
)^{k\nu} }{\eta\Gamma(k \nu) k!} \Big\}.
\end{align*}
Letting $\lambda=\frac{\rho t}{\xi(\rho{-}1)}$, $\nu=\rho-1$ and
$\beta=\frac{\gamma}{\rho}$, we have that
\[
p(\eta) = \exp\Big( {-}\frac{\rho t}{\xi(\rho{-}1)} \Big) \left
\{\delta_{0}(\eta) + \exp\Big({-}\frac{\rho\eta}{\gamma} \Big)
\eta^{-1} \sum_{k=1}^{\infty}
\frac{(\rho t/\xi)^{k} (\rho\eta/\gamma)^{k (\rho{-}1)} } {k!
(\rho{-}1)^k \Gamma(k(\rho{-}1))} \right\}.
\]

We now consider a mixture of $\Po(k|\phi\lambda)$ with $\Ga(\lambda
|\psi, 1/\beta)$ mixing. That is,
\begin{align*}
\Pr(T(t) =k \alpha) & = \int_{0}^{\infty} \Po(k|\lambda\phi) \Ga
(\lambda|\psi, 1/\beta) d \lambda\\
& = \int_{0}^{\infty}{\frac{(\lambda\phi)^k}{k!} \exp(-\lambda
\phi) \frac{\beta^{\psi}}{\Gamma(\psi)} \lambda^{\psi-1} \exp
(-\beta\lambda) d \lambda} \\
& = \frac{\beta^{\psi}}{\Gamma(\psi)} \frac{\Gamma(\psi
{+}k)}{k!} \frac{\phi^{k}}{(\phi+\beta)^{k+\psi}},
\end{align*}
which is $\mathrm{Nb}(T(t)|\psi, \beta/(\beta+\phi))$. Let $\psi
=(\rho+1) t/\xi$, $\phi=1$, $\beta=\rho$ and $q= \frac{\beta
}{\phi{+}\beta}$. Thus,
\[
\Pr(T(t)=k \alpha) = \frac{\Gamma(k{+}(\rho{+}1) t/\xi)}{k!
\Gamma((\rho{+}1) t/\xi)} q^{(\rho{+}1) t/\xi} (1-q)^k.\qedhere
\]
\end{proof}

\section*{Appendix 4: The Proof of Theorem~\ref{thm:limit}}
\label{app:limit}

\begin{proof}
Since $\liml_{\gamma\to0} \frac{\rho+1}{\rho\gamma}\Big
[1-(1+\frac{\gamma}{\rho+1})^{-\rho} \Big]=1$, we only need to consider
the case that $\xi=\gamma$.
Recall that
$\PG(t/\xi, \gamma, \rho)$, whose mean and variance are
\[
\EB(T(t)) = \frac{\gamma t}{\xi} =t \quad\mbox{ and } \quad{\Var
}(T(t))= \frac{\gamma^2 t}{\xi} = \gamma t
\]
whenever $\xi=\gamma$.
By Chebyshev's inequality, we have that
\begin{equation*}
\Pr\{ |T(t) - t| \geq\epsilon\} \leq
\frac{\gamma t}{\epsilon^2}.
\end{equation*}
Hence, we have that
\[
\lim_{\gamma\to0} \Pr\{ |T(t) - t| \geq\epsilon\} = 0.
\]
Similarly, we have Part (b).
\end{proof}

\section*{Appendix 5: The Proof of Proposition in Expression (\ref
{eqn:relat})}
\label{app:ee}

\begin{proof} We first note that
\[
2 \exp(\gamma s) = 2 + 2 \gamma s + (\gamma s)^2 + \frac{2}{3}(\gamma
s)^3+ \cdots,
\]
which implies that $2 \exp(\gamma s)-1 -(\gamma s+1)^2> 0$ for $s>0$.
Subsequently, we have that $\frac{d}{ds} \big[ \log(2 {-} \exp
({-}\gamma s)) - \frac{\gamma s}{1+\gamma s} \big] \leq0$. As a
result, $\log(2- \exp(-\gamma s)) -\frac{\gamma s}{1+\gamma s}<0$
for $s>0$.
As for $\frac{\gamma s}{\gamma s {+}1} \leq1 {-} \exp( {-} \gamma
s)$, it is directly obtained from that
\[
\frac{\gamma s}{\gamma s {+}1} =1- \frac{1}{1+\gamma s} =1 - \exp
(-\log(1+\gamma s))\leq1- \exp(-\gamma s).
\]
Since $\frac{d}{ds} \big[ 1 {-} \exp( {-} \gamma s) - \log\big
({\gamma} s {+}1 \big)\big] = \frac{\gamma}{\exp(\gamma s)} -
\frac{\gamma}{1+\gamma s}<0$ for $s>0$, we have that $1 {-} \exp(
{-} \gamma s) - \log\big({\gamma} s {+}1 \big)<0$ for $s>0$.
\end{proof}

\section*{Appendix 6: The Proof of Theorem~\ref{thm:poster}}
\label{app:ff}

\begin{proof} First consider that
\[
p({\bf b}|\sigma, {\bf t}, {\bf y}) \varpropto\frac{1}{(2\pi\sigma
)^{\frac
{n}{2}}} \exp\big[-\frac{1}{2 \sigma} \|{\bf y}- {\bf X}{\bf b}\|
_2^2 \big]
\prod_{j=1}^p \sigma^{-1} \exp\Big(-t_j \Psi\Big(\frac{|b_j|}
{\sigma} \Big) \Big).
\]
To prove that $p({\bf b}|\sigma, {\bf t}, {\bf y})$ is proper, it
suffices to
obtain that
\[
\frac{1}{(2\pi\sigma)^{\frac{n}{2}}} \int{\exp\big[-\frac{1}{2
\sigma} \|{\bf y}- {\bf X}{\bf b}\|_2^2 \big] \prod_{j=1}^p \sigma^{-1}
\exp
\Big(-t_j \Psi\Big(\frac{|b_j|} {\sigma} \Big) \Big) d {\bf b}<
\infty}.
\]
It is directly computed that
\begin{align} \label{eqn:pf01}
& \exp\big[-\frac{1}{2 \sigma} \|{\bf y}- {\bf X}{\bf b}\|_2^2 \big]
\nonumber
\\
& = \exp\big[ {-} \frac{1}{2 \sigma} ({\bf b}{-} {\bf z})^T {\bf
X}^T {\bf X}
({\bf b}-
{\bf z}) \big] \times\exp\big[- \frac{1}{2 \sigma} {\bf y}^T
({\bf I}_n - {\bf X}
({\bf X}^T {\bf X})^{+} {\bf X}^T ) {\bf y}\big],
\end{align}
where ${\bf z}= ({\bf X}^T {\bf X})^{+} {\bf X}^T {\bf y}$ and $({\bf
X}^T {\bf X})^{+}$ is the
Moore-Penrose pseudo inverse of matrix ${\bf X}^T {\bf X}$~\citep
{Magnus:1999}. Here we use the well-established properties that ${\bf
X}({\bf X}
^T {\bf X})^{+} ({\bf X}^T {\bf X}) ={\bf X}$ and $({\bf X}^T {\bf
X})^{+} ({\bf X}^T {\bf X}) ({\bf X}^T
{\bf X})^{+} = ({\bf X}^T {\bf X})^{+}$.
Notice that if ${\bf X}^T {\bf X}$ is nonsingular, then $({\bf X}^T
{\bf X})^{+} = ({\bf X}
^T {\bf X})^{-1}$. In this case, we consider a conventional multivariate
normal distribution $N({\bf b}|{\bf z}, \sigma({\bf X}^T {\bf
X})^{-1})$. Otherwise,
we consider a singular multivariate
normal distribution $N({\bf b}|{\bf z}, \sigma({\bf X}^T {\bf X})^{+})$
~\citep{Mardia:1979}, the density of which is given by
\[
\frac{\prod_{j=1}^q \sqrt{\lambda_j({\bf X}^T {\bf X})} }{(2 \pi
\sigma
)^{q/2}} \exp\big[ {-} \frac{1}{2 \sigma} ({\bf b}{-} {\bf z})^T
{\bf X}^T
{\bf X}
({\bf b}- {\bf z}) \big].
\]
Here $q = \rk({\bf X})$, and $\lambda_j({\bf X}^T {\bf X})$, $j=1,
\ldots, q$,
are the positive eigenvalues of ${\bf X}^T {\bf X}$.
In any case, we always write $N({\bf b}|{\bf z}, \sigma({\bf X}^T {\bf
X})^{+})$.
Thus, $\int{ \exp\big[-\frac{1}{2 \sigma} \|{\bf y}- {\bf X}{\bf
b}\|_2^2
\big
] d{\bf b}< \infty}$. It then follows the propriety of $p({\bf
b}|\sigma,
{\bf t}, {\bf y})$ because
\[
\exp\big[{-}\frac{1}{2 \sigma} \|{\bf y}- {\bf X}{\bf b}\|_2^2 \big
] \prod
_{j=1}^p \exp\Big( {-} t_j \Psi\Big(\frac{|b_j|} {\sigma} \Big)
\Big)\leq\exp\big[ {-} \frac{1}{2 \sigma} \|{\bf y}- {\bf X}{\bf
b}\|_2^2
\big].
\]

We now consider that
\[
p({\bf b}, \sigma| {\bf t}, {\bf y}) \varpropto{\sigma^{-(\frac
{n+\alpha
_{\sigma}+2p}{2} +1)} } \exp\Big[-\frac{\|{\bf y}- {\bf X}{\bf b}\|
_2^2 {+}
\beta_{\sigma}}{2 \sigma} \Big] \prod_{j=1}^p \exp\Big(-t_j \Psi
\Big(\frac{|b_j|} {\sigma} \Big) \Big).
\]
Let $\nu= {\bf y}^T[{\bf I}_n- {\bf X}({\bf X}^T{\bf X})^{+} {\bf
X}^T] {\bf y}$. Since the matrix
${\bf I}_n- {\bf X}({\bf X}^T{\bf X})^{+} {\bf X}^T$ is positive
semidefinite, we obtain
$\nu\geq0$. Based on expression (\ref{eqn:pf01}), we can write
\[
{\sigma^{-(\frac{n{+}\alpha_{\sigma}{+}2p}{2} +1)} } \exp\Big
[{-}\frac{\|{\bf y}{-} {\bf X}{\bf b}\|_2^2 {+} \beta_{\sigma}}{2
\sigma}
\Big
] \varpropto N({\bf b}|{\bf z}, \sigma({\bf X}^T {\bf X})^{+})
{\IGa}(\sigma|\frac{\alpha_{\sigma} {+}n{+}2p{-}q}{2}, \nu{+}
\beta_{\sigma}).
\]
Subsequently, we have that
\[
\int{{\sigma^{-(\frac{n+\alpha_{\sigma}+2p}{2} +1)} } \exp\Big
[-\frac{\|{\bf y}- {\bf X}{\bf b}\|_2^2 + \beta_{\sigma}}{2 \sigma}
\Big]
d {\bf b}
d \sigma} < \infty,
\]
and hence,
\[
\int{{\sigma^{-(\frac{n+\alpha_{\sigma}+2p}{2} +1)} } \exp\Big
[-\frac{\|{\bf y}- {\bf X}{\bf b}\|_2^2 + \beta_{\sigma}}{2 \sigma}
\Big]
\prod_{j=1}^p \exp\Big(-t_j \Psi\Big(\frac{|b_j|} {\sigma} \Big
) \Big) d {\bf b}d \sigma} < \infty.
\]
Therefore $p({\bf b}, \sigma|{\bf t}, {\bf y})$ is proper.

Thirdly, we take
\begin{align*}
p({\bf b}, \sigma, {\bf t}| {\bf y}) & \varpropto\frac{\exp\Big
[-\frac
{\|
{\bf y}- {\bf X}{\bf b}\|_2^2 + \beta_{\sigma}}{2 \sigma} \Big]}
{\sigma
^{\frac{n+\alpha_{\sigma}+2p}{2} +1} } \prod_{j=1}^p \Big\{\exp
\Big({-}t_j \Psi\Big(\frac{|b_j|} {\sigma} \Big) \Big) \frac{
t_j^{{\alpha_t}{-} 1} \exp({-} {\beta_t} t_j)}{\Gamma({\alpha_t})}
\Big\} \\
& \triangleq F({\bf b}, \sigma, {\bf t}).
\end{align*}
In this case, we compute
\[
\int{F({\bf b}, \sigma, {\bf t}) d {\bf b}d \sigma d {\bf t}}= \int
{\frac
{\exp\Big[-\frac{\|{\bf y}- {\bf X}{\bf b}\|_2^2 + \beta_{\sigma}}{2
\sigma}
\Big]} {\sigma^{\frac{n+\alpha_{\sigma}+2p}{2} +1} } \prod
_{j=1}^p \frac{1}{ \Big({\beta_t} {+} \Psi\Big(\frac{|b_j|}
{\sigma} \Big) \Big)^{{\alpha_t}}} d {\bf b}d \sigma}.
\]
Similar to the previous proof, we also have that
\[
\int{F({\bf b}, \sigma, {\bf t}) d {\bf b}d \sigma d {\bf t}} <
\infty
\]
because ${ \Big({\beta_t} {+} \Psi\Big(\frac{|b_j|} {\sigma} \Big
) \Big)^{-{\alpha_t}}} \leq{{\beta_t}^{-{\alpha_t}}}$.
As a result, $p({\bf b}, \sigma, {\bf t}| {\bf y})$ is proper.

Finally, consider the setting that $p(\sigma) \varpropto\frac
{1}{\sigma}$. That is, $\alpha_{\sigma}=0$ and $\beta_{\sigma}=0$.
In this case,
if ${\bf y}\notin\mathrm{range}({\bf X})$, we obtain $\nu>0$ and
$q<n$. As a result,
we use the inverse Gamma distribution ${\IGa}(\sigma|\frac
{n{+}2p{-}q}{2}, \nu)$. Thus, the results still hold.
\end{proof}



%

%
\begin{acknowledgement}
The authors would like to thank the Editors and two anonymous referees
for their constructive comments and suggestions on the original version
of this paper.
The authors would especially like to thank the Associate Editor for
giving extremely detailed comments
on earlier drafts. This work has been supported in part by the Natural
Science Foundation of
China (No. 61070239).
\end{acknowledgement}

\end{document}